
\documentclass[journal]{IEEEtran}
\ifCLASSINFOpdf
  % \usepackage[pdftex]{graphicx}
  % declare the path(s) where your graphic files are
  % \graphicspath{{../pdf/}{../jpeg/}}
  % and their extensions so you won't have to specify these with
  % every instance of \includegraphics
  % \DeclareGraphicsExtensions{.pdf,.jpeg,.png}
\else
  % or other class option (dvipsone, dvipdf, if not using dvips). graphicx
  % will default to the driver specified in the system graphics.cfg if no
  % driver is specified.
  % \usepackage[dvips]{graphicx}
  % declare the path(s) where your graphic files are
  % \graphicspath{{../eps/}}
  % and their extensions so you won't have to specify these with
  % every instance of \includegraphics
  % \DeclareGraphicsExtensions{.eps}
\fi
\hyphenation{op-tical net-works semi-conduc-tor}

\pdfoutput=1
\usepackage{graphicx}
\usepackage[tight]{subfigure}
\usepackage{multirow}
\usepackage{amsmath}
\usepackage{amsthm}
\usepackage{amssymb}
\usepackage{amsfonts}
\usepackage{booktabs}
\usepackage{chngpage}
\usepackage[table,xcdraw]{xcolor}
\usepackage{authblk}
\usepackage{setspace}
\usepackage{cite}
\usepackage{textcomp}
\usepackage{color}
\usepackage{algorithmic}
\usepackage{grffile}
\usepackage{float}
\usepackage[linesnumbered,titlenumbered,ruled,vlined,commentsnumbered,noend]{algorithm2e}
\usepackage{capt-of}
\usepackage{caption}
\captionsetup{justification=centering}

\DeclareMathOperator*{\argmin}{argmin}
\newtheorem{definition}{Definition}
\newtheorem{proposition}{Proposition}

\newtheorem{assumption}{Assumption}
\newtheorem{theorem}{Theorem}

\newcommand{\AlgName}{\textsc{Ubar}\xspace}
\newcommand{\bheading}[1]{{\vspace{2pt}\noindent{\textbf{#1}}\hspace{2pt}}}

\newcommand{\newtextC}[1]{{\leavevmode\color{black}#1}}

\begin{document}
\title{Byzantine-resilient Decentralized Stochastic Gradient Descent}
\author{Shangwei Guo, Tianwei Zhang, Han Yu,  Xiaofei Xie, Lei Ma, Tao Xiang, and Yang Liu
    \thanks{S. Guo and T. Xiang are with College of Computer Science, Chongqing University, Chongqing 400044, China (email: \{swguo, txiang\}@cqu.edu.cn).}
    \thanks{T. Zhang, H. Yu, X. Xie, and Y. Liu are with School of Computer Science and Engineering, Nanyang Technological University 639798, Singapore (email: \{tianwei.zhang, han.yu, xfxie, and yangliu\}@ntu.edu.sg).}
    \thanks{L. Ma is with University of Alberta, Edmonton, Alberta T6G 2R3, Canada (email: ma.lei@acm.org).}}

\newenvironment{packeditemize}{
\begin{list}{$\bullet$}{
\setlength{\labelwidth}{8pt}
\setlength{\itemsep}{0pt}
\setlength{\leftmargin}{\labelwidth}
\addtolength{\leftmargin}{\labelsep}
\setlength{\parindent}{0pt}
\setlength{\listparindent}{\parindent}
\setlength{\parsep}{0pt}
\setlength{\topsep}{3pt}}}{\end{list}}

\maketitle

\begin{abstract}
Decentralized learning has gained great popularity to improve learning efficiency and preserve data privacy. Each computing node makes equal contribution to collaboratively learn a Deep Learning model. The elimination of centralized Parameter Servers (PS) can effectively address many issues such as privacy, performance bottleneck and single-point-failure. However, how to achieve Byzantine Fault Tolerance in decentralized learning systems is rarely explored, although this problem has been extensively studied in centralized systems.

In this paper, we present an in-depth study towards the Byzantine resilience of decentralized learning systems with two contributions. First, from the adversarial perspective, we theoretically illustrate that Byzantine attacks are more dangerous and feasible in decentralized learning systems: even one malicious participant can arbitrarily alter the models of other participants by sending carefully crafted updates to its neighbors. Second, from the defense perspective, we propose \AlgName, a novel algorithm to enhance decentralized learning with Byzantine Fault Tolerance. Specifically, \AlgName provides a \textbf{U}niform \textbf{B}yzantine-resilient \textbf{A}ggregation \textbf{R}ule for benign nodes to select the useful parameter updates and filter out the malicious ones in each training iteration. It guarantees that each benign node in a decentralized system can train a correct model under very strong Byzantine attacks with an arbitrary number of faulty nodes. We conduct extensive experiments on standard image classification tasks and the results indicate that \AlgName can effectively defeat both simple and sophisticated Byzantine attacks with higher performance efficiency than existing solutions.
\end{abstract}
\begin{IEEEkeywords}
    Decentralized learning, Stochastic gradient descent, Byzantine attack, Byzantine fault tolerance
\end{IEEEkeywords}
\IEEEpeerreviewmaketitle

\section{Introduction}\label{sec:introduction}

\IEEEPARstart{T}{he} rapid development of edge computing and Deep Learning (DL) technologies leads to the era of Artificial Intelligence of Things. Nowadays, it is a trend to learn and deploy powerful DL models on edge devices \cite{li2018learning,wang2019e2,sun2020stealthy,chen2019distributed,zhao2019dynamic} for various AI tasks (e.g., image classification, video processing).
Such collaborative learning can increase the model generalization and achieve data privacy, since the model is trained from different sources of data without being released. Meanwhile, the collaboration mode enables resource-constrained devices to train large-scale models efficiently.
One typical example is the federated learning system \cite{mcmahan2016communication,bonawitz2019towards}, where multiple edge devices can collaborate to train a shared DL model for different applications and scenarios, such as healthcare \cite{rudovic2021personalized}, security surveillance \cite{zhang2021distributed}, intelligent transportation \cite{manias2021making}.

% decentralized learning
However, federated learning introduces a centralized parameter server, which can bring new security and efficiency
drawbacks \cite{lian2017can,bonawitz2019towards,xie2019diffchaser}. First, federated learning  suffers from single point of failure. The functionality
of the system highly depends on the operations of the parameter server. If the server gets crashed or
hacked, then the entire system will be broken down, affecting all the edge devices. Second, the centralized parameter server can be the performance bottleneck, particularly when a large amount of edge devices are connected to this server. 

Due to the limitations of PS-based centralized learning, there is a growing trend towards training a DL model in a decentralized fashion \cite{yang2016rd,su2016fault,lian2017can,dobbe2017fully,tang2018communication,lalitha2019decentralized}. Specifically, centralized servers are eliminated from the system while each participant plays an equal role (both training and aggregating parameters) in learning the model \cite{tsitsiklis1984problems,nedic2009distributed}. This decentralization mode exhibits huge potential for DL applications in many scenarios: in autonomous driving \cite{chen2015robust,gulati2018deep}, cars can capture images or videos during the driving and collaboratively learn powerful models for detecting traffic lights, sign, lane and pedestrians; in video coding, users can learn faster and better video coding mechanism by communicating with others \cite{abou2012fusion,yang2018high}.

A distributed system can be threatened by the famous Byzantine Generals Problem \cite{lamport1982byzantine}:
some nodes inside the network can conduct inappropriate behaviors, and propagate wrong information, leading to
the failure of the entire system. This is particularly dangerous in the distributed learning scenarios due to three reasons. (1)
Distributed learning requires the collaboration of thousands of edge devices from different domains and
parties. It is impossible to guarantee that each device is trusted and reliable. A single dishonest node
can send wrong parameters/estimates to affect the entire network and final results. (2) Modern IoT devices and networks are complicated and vulnerable. Recent years have witnessed many infamous IoT attacks (e.g., Mirai Botnet
\cite{antonakakis2017understanding}, Stuxnet \cite{langner2011stuxnet}), enabling an adversary to easily
compromise a large scale of IoT devices. This facilitates the Byzantine attacks in distributed learning
systems. (3) The consequences of attacks can be very severe. Past works have shown that an adversary can
compromise the centralized distributed learning system to alter the behaviors of the training process or final models
\cite{blanchard2017machine}.
% It can prevent the training from being converged, leading to
% severe denial-of-service attacks and huge finance loss. It can also change the model's predictionaccuracy or inject backdoors, raising great threats to security-critical or safety-critical applications,e.g., object detection in autonomous driving, image classification for authentication, etc.

This Byzantine Generals Problem has been extensively studied in the centralized PS-based learning systems. Attacks with different threat models and goals \cite{xie2019zeno,fang2019local} were designed to demonstrate this vulnerability. Meanwhile, Byzantine-resilient defense solutions were also introduced to enhance the system. However, very few works have focused on the Byzantine threats in decentralized learning systems. We are particularly interested in two questions: (1) \emph{how feasible and severe are the Byzantine attacks in decentralized learning systems?} (2) \emph{How can we improve the Byzantine resilience of a decentralized system?}. Currently there are no satisfactory answers due to the distinct features of centralized and decentralized systems.

In this paper, we provide an in-depth study to answer the above two questions. First, we formally define the Byzantine Generals Problem in the decentralized learning setting, and theoretically analyze the corresponding vulnerabilities. We discover that the indirect connection to a malicious node cannot reduce the attack cost and amplify the damage. We prove that an adversary can just use one node to alter the models of all nodes inside the system arbitrarily. This is different from the centralized system, which only requires tampering the model on PS for a successful attack.

Second, we explore the possible solutions to secure decentralized learning systems with Byzantine Fault Tolerance (BFT). It is challenging to apply the Byzantine-resilience methods from PS-based systems \cite{blanchard2017machine,xie2018generalized,chen2018draco,xie2019slsgd,konstantinov2019robust,sohn2020election,konstantinov2020sample} to the decentralized scenario due to two reasons.
First, those defenses have security and efficiency drawbacks in protecting centralized systems. They are either vulnerable to elaborately designed Byzantine attacks \cite{mhamdi2018hidden,baruch2019little,fang2019local}, or have large computation overhead and scalability issue with unrealistic requirements (e.g., the PS has extra validation dataset) \cite{xie2019zeno,fang2019local}.
%when conducting performance evaluation for each update. Besides, some defenses \cite{xie2019zeno,fang2019local} require the parameter server to have an extra validation dataset, which is not realistic.
These limitations still exist if the defenses were extended into decentralized systems. The second reason lies in the huge differences between centralized and decentralized learning systems. Existing defenses are mainly designed for the centralized PS to make decisions. However, each participant in decentralized learning acts as not only a worker node, but also a PS. In addition, the number of neighbors connected to each node varies dramatically. So some assumptions made in the centralized defenses will not hold. To the best of our knowledge, currently there are few research papers \cite{yang2019byrdie} attempting to achieve Byzantine-resilient decentralized learning by comparing the distances among estimates, which is vulnerable to sophisticated Byzantine attacks, as demonstrated in the evaluation section of this paper.

We propose \AlgName, a novel \textbf{U}niform \textbf{B}yzantine-resilient \textbf{A}ggregation \textbf{R}ule to secure decentralized learning systems.  \AlgName consists of two design stages. The first stage is introduced to mitigate simple Byzantine attacks (e.g., \cite{xie2018generalized}) by shortlisting a set of candidate nodes: each benign node selects a number of potential benign nodes based on the distances of their parameters to its own. The second stage is used to select the final parameters and defeat advanced Byzantine attacks \cite{mhamdi2018hidden,baruch2019little,fang2019local}: each benign node uses its training samples to test the performance of the parameters from the first phase, and chooses the ones with the best training quality.

\AlgName leverages the unique features of decentralized systems to overcome the limitations of prior solutions. Since each node acts as both a worker and PS, it can use its own parameters as the baseline (Stage 1) instead of the average or median of neighbor nodes in PS-based systems. This can effectively protect the baseline values from being manipulated, and mitigate an arbitrary number of Byzantine nodes. Each node also uses its training samples for performance evaluation (Stage 2), which can perfectly relax the unrealistic assumption of the server's availability of validation datasets in PS-based systems. Besides, since Stage 1 is vulnerable to advanced Byzantine attacks \cite{mhamdi2018hidden,baruch2019little,fang2019local} while Stage 2 suffers from scalability and cost issues, the integration of these two stages can achieve both efficiency and strong Byzantine resilience.
We conduct comprehensive experiments to show that \AlgName is tolerant against both simple and sophisticated attacks, while all
existing defense solutions fail. Besides, \AlgName also achieves 8-30X performance improvement over existing methods.

The key contributions of this paper are:
\begin{packeditemize}
    \item We theoretically analyze and demonstrate the vulnerabilities of the Byzantine Generals Problem in decentralized learning.
    \item We propose \AlgName, a uniform Byzantine-resilient aggregation rule, to defeat an arbitrary number of Byzantine nodes in decentralized systems.
    \item We conduct extensive experiments to show \AlgName outperforms other solutions for both security and performance.
\end{packeditemize}

The rest of this paper is organized as follows. Background and related works are reviewed in Section~\ref{sec:literature}.
Section~\ref{sec:problemdef} gives formal definitions of decentralized systems
%with Byzantine Fault.
We analyze the Byzantine Fault of decentralized learning in Section~\ref{sec:attack}. Section~\ref{sec:algorithm} presents our novel Byzantine-resilient solution.
Section~\ref{sec:experiments} shows the experimental results under various attacks and system settings. Section~\ref{sec:disscussion} and \ref{sec:conclusion} discusses the limitations and concludes the paper.

\section{Background and Related Work}\label{sec:literature}

\subsection{Byzantine-resilient Centralized Learning}
A centralized learning system consists of a Parameter Server (PS) and multiple distributed
worker nodes, as shown in Figure \ref{fig:framework}(a). Every worker node has its own training dataset, but adopts the same training
algorithm. \newtextC{In each iteration, a worker node 1) pulls the gradient from the PS, 2) updates the gradient based on its local data, 3) uploads the new gradient to the PS, and
the PS 4) aggregates all the received gradients from the worker nodes into one gradient vector.} The nodes repeat the above steps from the new gradient, until the training process
is terminated and a model is produced.

\begin{figure}[t]
	\captionsetup{justification=justified}
	\centering
	\includegraphics[width=0.9\columnwidth]{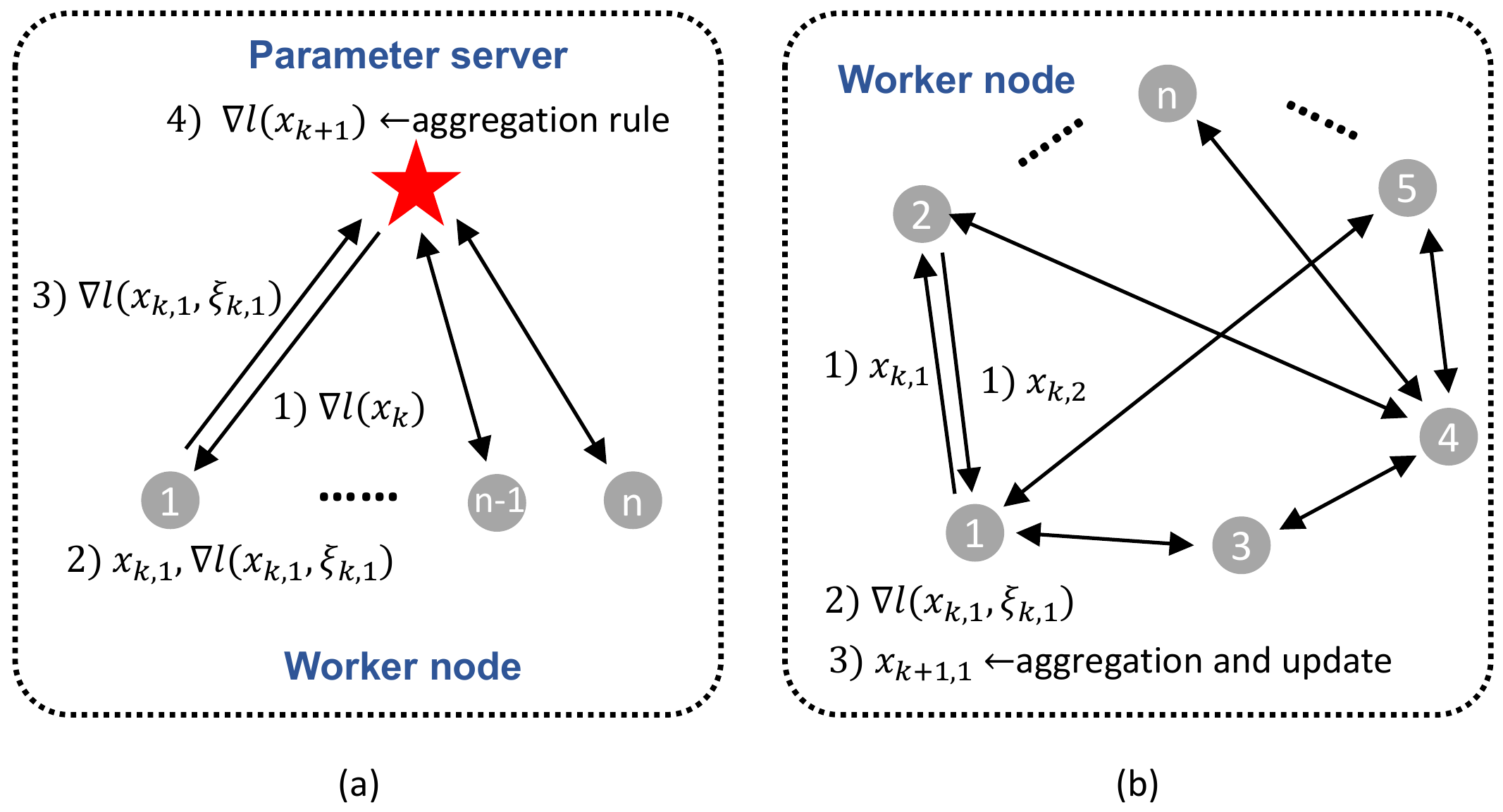}\label{fig:framework_decen}
	\caption{Distributed learning in centralized (a) and decentralized (b) fashions.}
	\label{fig:framework}
\end{figure}
Dishonest nodes can compromise the training process and the final model by uploading
wrong gradients \cite{mhamdi2018hidden,baruch2019little,fang2019local}. It is necessary for
the PS to detect such Byzantine nodes and discard their updates when aggregating
the gradients.
% Existing defenses can be categorized into two classes, as described below.

Motivated by the parameter difference between benign and malicious estimates, a number of solutions cluster the uploaded gradients and detect the outliers based on the vector distances. For instance, Blanchard et al. proposed Krum \cite{blanchard2017machine},
which chooses the gradient vector with the minimal sum of squared distances to its neighbors as the aggregated one. Median-based Aggregation rules \cite{xie2018generalized,yin2018byzantine} were designed, which inspect the gradient vectors and calculate the median values in each dimension to defeat Byzantine
attacks. Mhamdi et al. introduced Bulyan  \cite{mhamdi2018hidden}, to further enhance existing
Byzantine-resilient aggregation rules by combining Krum and Median-based aggregation rules. Although these defenses can defeat
simple Byzantine attacks such as Gaussian and bit-flip attacks \cite{blanchard2017machine,xie2018generalized}, they were vulnerable against
more sophisticated attacks \cite{baruch2019little,fang2019local}. The reason lies in the vulnerability of distance-based strategies: the close distance between two gradients does not imply similar performance. Thus, these sophisticated attacks could create gradients that are malicious but indistinguishable from benign gradients in distance.

Some solutions selects the benign nodes by evaluating the performance of each uploaded
gradient on extra validation datasets. For instance, Xie et al. proposed Zeno \cite{xie2019zeno} and Zeno++ \cite{xie2019zeno++} for synchronous and asynchronous learning systems, respectively. Both Zeno and Zeno++ calculate the prediction accuracy of each gradient on the extra validation datasets to identify Byzantine nodes.  However,
they require that the PS has a validation dataset, which is not realistic under some circumstances.
Besides, performance evaluations of all gradients have much more overhead than parameter evaluations. This
can significantly increase the total training time and the computation burdens for the PS,
especially when the number of worker nodes is larger.

Some solutions select benign gradients and nodes based on their history records. For instance, Hidden Markov Model was utilized \cite{munoz2019byzantine} to learn the quality of parameter updates during distributed training. The learned profiles can improve the efficiency and accuracy of detecting malicious nodes.  Pan et al. \cite{panjustinian} utilized the historical interactions with the workers as experience to identify Byzantine attacks via reinforcement learning techniques. However, these solutions cannot guarantee Byzantine resilience. An adversary can easily bypass the detection algorithms by pretending to be benign at the beginning and only uploading malicious parameters at the last several iterations. Then the learned profiles cannot predict malicious behaviors in future iterations.

\subsection{Byzantine-resilient Decentralized Learning}
Decentralized learning systems remove the PS, as every node in the network is also responsible
for model update \cite{he2018cola,nedic2009distributed,lian2017can}. The architecture of a decentralized learning system is illustrated in Figure \ref{fig:framework}(b). Specifically, in each iteration of the training process, each worker node 1) broadcasts its parameter vectors (estimates\footnote{``Parameter vector'' and ``estimate''
are used interchangeably.}) to its neighbor nodes, and receive the estimates from them; 2) trains the model
estimates using the local data. 3) It then aggregates them with the neighbor nodes' estimates and updates the model.

Compared to centralized learning, research of Byzantine-resilient decentralized learning is still at an early
stage. Several attempts have been made to achieve Byzantine-resilient decentralized learning \cite{yang2019byrdie,yang2019bridge,peng2020byzantine}. For example, Yang et al. proposed ByRDiE \cite{yang2019byrdie} and BRIDGE \cite{yang2019bridge}, which simply apply
the trimmed-median algorithm from centralized systems \cite{xie2018generalized,yin2018byzantine} to decentralized systems. While ByRDiE is designed for the coordinate descent optimization algorithm, BRIDGE is used in decentralized learning systems with SGD. Similar to \cite{xie2018generalized,yin2018byzantine}, those solutions are vulnerable to some Byzantine attacks \cite{mhamdi2018hidden,baruch2019little}. In Section~\ref{sec:experiments}, we will demonstrate their incapability of defeating sophisticated Byzantine attacks.

Yang and Bajwa \cite{yang2016rd} proposed RD-SVM to support distributed Support Vector Machine (SVM) against Byzantine attacks. RD-SVM compares the losses from neighbor nodes to identify and filter potential Byzantine nodes. It adopts the hinge loss, and involves all the data samples at each iteration for Byzantine identification. Hence, it is more applicable to linear classifiers like SVM. In contrast, this paper focuses on deep learning models with the mainstream SGD algorithm and batch training feature. We propose \AlgName to fulfill these requirements and improving the efficiency.
% It contains two phases to compare not only the losses (as in SVM) but also the parameters, hence significantly improving the efficiency.

\section{System Model of Decentralized Learning}~\label{sec:problemdef}
In this section, we formally define a decentralized communication system and describe the learning task.
%under the Byzantine Fault circumstance .
%  and illustrate the vulnerability as well as the attack consequences.
% This will help us describe and analyze our proposed defense solution in Section \ref{sec:algorithm}.
\subsection{Decentralized Systems}
A decentralized system is defined as an undirected graph: $\mathcal{G} = (V, E)$, where $V$ denotes a set of $n$ nodes
and $E$ denotes a set of edges representing communication links. Specifically, we have
\begin{packeditemize}
    \item $(i, j) \in E$ if and only if node $i$ can receive information from node $j$;
    \item $(j, i) \in E$ if $(i, j) \in E$.
\end{packeditemize}

Let $\mathcal{N}_i = \{j| (i, j) \in E\}$ be the set of the neighbors of node $i$. We further assume that $n_b$ out of $n$ nodes are benign and the rest are malicious. We can define a subgraph that only contains the benign nodes:

% Induced subgraph
\begin{definition}(Benign Induced Subgraph)
    The benign induced subgraph, $\mathcal{G}_b = (V_b, E_b)$, is a subgraph of $\mathcal{G}$, formed by all the benign nodes in $\mathcal{G}$ and all the edges connecting those benign nodes. Specifically,
    \begin{packeditemize}
    \item \newtextC{$ i \in V_b \subseteq V$ if $i$ is a benign node and $|V_b|=n_b$;}
    \item $(i, j) \in E_b \subseteq E$ if and only if $i, j \in V_b$;
    \item $(j, i) \in E_b$ if $(i, j) \in E_b$.
    \end{packeditemize}
\end{definition}

Following the information exchange models in \cite{tsitsiklis1984problems,nedic2009distributed}, we assume the benign induced
subgraph is fully connected, i.e., giving two arbitrary benign nodes $i$ and $j$, there always exists at least one path that
connects these two nodes. We formally state the assumption as below:

\begin{assumption}\label{label:connectivity}{(Connectivity of Benign Induced Subgraph)}
    There exists an integer $\tau$ such that for $\forall i, j \in V_b$, node $j$ can propagate its information to node $i$ through at most $\tau$ edges.
\end{assumption}

%The assumption implies that for any two benign nodes $i$ and $j$ in $\mathcal{G}_b$, there exists a path between $i$ and $j$ that consists of a finite sequence of edges (at most $\tau$) which joins a sequence of benign nodes. To ensure the connectivity assumption, we assume that the minimal ratio of benign neighbors of node $i \in V_b$ is $\rho_i$. Compared with centralized distributed systems, the number of neighbors around each node varies dramatically and the number of Byzantine nodes is also not a fixed scalar. Thus, it does not make sense that we assume that each node has a certain number of Byzantine nodes, which is a core difference between distributed and decentralized distributed learning systems.

\subsection{Model Training}
In a decentralized learning system, $n$ nodes cooperatively train a model by optimizing the loss function with SGD and exchanging estimates with their neighbors.
%while Byzantine nodes attempt to tamper with such training process.
Let $x \in \mathbb{R}^d$ be the $d$-dimensional estimate vector of a DL model; $l$ be the loss function. Each node $i \in V$ obtains a training dataset $D_i$, consisting of independent and identically distributed (IID) data samples from a distribution $D$. Those \newtextC{$n$} nodes train a shared model by solving the following optimization problem.
%with the presence of Byzantine nodes:
\begin{equation}
    \min_{x\in \mathbb{R}^d} \mathbb{E}_{\xi\sim D}l(x,\xi)
\end{equation}
where $\xi$ is a training data sample from $D$ and $l(x,\xi)$ is calculated on $\xi$.
%Our design goal is to make sure the convergence of the shared model even with the presence of the Byzantine nodes.
\section{Byzantine Attack in Decentralized Learning}~\label{sec:attack}
% how to update model (benign)
In this section, we theoretically demonstrate the feasibility and severity of Byzantine attacks in decentralized learning systems.
Following the decentralized network topology, the nodes in $V$ iteratively optimize the shared model until reaching convergence or the maximum number of iterations. Specifically, at the $k$-th iteration, node $i$ has its local estimate denoted as $x_{k, i}$, and broadcasts it to its neighbors. When receiving the estimates from the neighbors, node $i$ will update its local estimate according to the General Update Function (GUF):
\begin{definition}{(GUF)}
    Let $x_{k,i}$, $\nabla l(x_{k,i},\xi_{k,i})$ be the estimate and gradient of node $i$ at the $k$-th iteration. $\{x_{k,j}, j \in \mathcal{N}_i\}$ are the estimates from its neighbors. $\mathcal{R}$ is an aggregation rule. Node $i$ updates its estimate for the $(k+1)$-th iteration using the following general update function:
    \begin{equation}
        x_{k+1,i} = \alpha x_{k,i} + (1-\alpha)\mathcal{R}(x_{k,j}, j \in \mathcal{N}_i)- \lambda \nabla l(x_{k,i},\xi_{k,i})
    \end{equation}
    where $\lambda$ is the learning rate; $\alpha$ is a hyper-parameter that balances the weights of the estimates.
\end{definition}
Without loss of generality, we assume all the nodes have the same learning rate. The stochastic gradient can be replaced with a mini-batch of stochastic gradients \cite{lian2017can}.

A straightforward and common way is the average aggregation rule:
\begin{equation}\label{equ:original_ar}
    \mathcal{R}_{Average} = \frac{1}{|\mathcal{N}_i|}\sum_{j\in \mathcal{N}_i}x_{k,j}
\end{equation}
and $\alpha$ is set as $\frac{1}{|\mathcal{N}_i|+1}$. However, because the average aggregation in Equation \ref{equ:original_ar} does not consider BFT, this training
process can be easily compromised by Byzantine attacks: an adversary can use just one malicious node to send wrong estimates to
its neighbors and alter their aggregated estimates.
More seriously, due to the fully connectivity of benign induced
subgraph (Assumption \ref{label:connectivity}), this fault will be also propagated to other benign nodes not directly connected to this Byzantine node after several
iterations, and finally all the nodes in this network will be affected.

\begin{theorem}\label{lemma:attack}
    Consider a decentralized system \newtextC{under the average aggregation rule $\mathcal{R}_{Average}$}. In this system $\hat{i}$ is a Byzantine node, attempting to add a malicious vector $\hat{x}$ to the estimate of a benign node $i_{\tau'}$. The shortest distance (i.e., number of edges) between them is $\tau'$ and $\{i_s\}_{s=1}^{\tau'-1}$ are the benign nodes on the shortest trace between $i_{\tau'}$ and $\hat{i}$. The distance between node $i_s$ and $\hat{i}$ is $s$. Then at the $k_0$-th iteration, node \newtextC{$\hat{i}$} can broadcast to its neighbors the following estimate to achieve this goal in $\tau'$ iterations:
    \begin{equation}
        x = x_{k_0, \hat{i}} + \hat{x}\prod_{s=1}^{\tau'}(|\mathcal{N}_{i_s}|+1)
    \end{equation}
    where $|\mathcal{N}_{i_s}|$ is the number of neighbors of node $i_s$.
\end{theorem}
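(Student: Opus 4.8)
The plan is to track how a single perturbation injected by $\hat{i}$ at iteration $k_0$ propagates hop-by-hop along the shortest path $\hat{i}, i_1, \dots, i_{\tau'}$, exploiting the fact that under $\mathcal{R}_{Average}$ the General Update Function is \emph{affine}. Concretely, I would couple two executions of the protocol that share the same initialization and the same stochastic samples $\xi_{k,i}$ and that agree on every node's behavior, except that in the second execution the Byzantine node broadcasts $x = x_{k_0,\hat{i}} + \delta$ at iteration $k_0$ (with $\delta \in \mathbb{R}^d$ to be determined) and then reverts to broadcasting the attack-free value at iterations $k_0+1,\dots,k_0+\tau'-1$. Writing $\Delta_{k,i}$ for the difference of node $i$'s estimate between the two executions, the only nonzero entry of $\Delta_{k_0,\cdot}$ is $\Delta_{k_0,\hat{i}} = \delta$, and the goal reduces to choosing $\delta$ so that $\Delta_{k_0+\tau',i_{\tau'}} = \hat{x}$.

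Substituting $\alpha = 1/(|\mathcal{N}_i|+1)$ into the GUF gives $x_{k+1,i} = \frac{1}{|\mathcal{N}_i|+1}\bigl(x_{k,i} + \sum_{j\in\mathcal{N}_i} x_{k,j}\bigr) - \lambda\nabla l(x_{k,i},\xi_{k,i})$ for every benign $i$, so the difference recursion is $\Delta_{k+1,i} = \frac{1}{|\mathcal{N}_i|+1}\bigl(\Delta_{k,i} + \sum_{j\in\mathcal{N}_i}\Delta_{k,j}\bigr) - \lambda\bigl(\nabla l(x^{(2)}_{k,i},\xi_{k,i}) - \nabla l(x^{(1)}_{k,i},\xi_{k,i})\bigr)$. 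The gradient term vanishes exactly when node $i$'s own estimate is still unperturbed at iteration $k$, i.e.\ when $\Delta_{k,i}=0$. Since a node's estimate at iteration $k+1$ depends only on its neighbors' estimates at iteration $k$, a perturbation advances at most one hop per iteration; hence $\mathrm{supp}(\Delta_{k_0+t,\cdot})$ is contained in the ball of radius $t$ around $\hat{i}$, and in particular $i_s$ (which lies at graph-distance $s$ from $\hat{i}$) still satisfies $\Delta_{k_0+s-1,i_s}=0$ at the moment it performs the aggregation that first transmits the perturbation to it. Therefore all gradient terms cancel along the path.

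With the gradient terms removed I would prove by induction on $s$ that $\Delta_{k_0+s,i_s} = \delta \big/ \prod_{r=1}^{s}(|\mathcal{N}_{i_r}|+1)$. The base case $s=1$ holds because $\hat{i}$ is the unique neighbor of $i_1$ with a nonzero $\Delta_{k_0,\cdot}$, giving $\Delta_{k_0+1,i_1} = \delta/(|\mathcal{N}_{i_1}|+1)$. For the inductive step, the minimality of the path is what guarantees that at iteration $k_0+s-1$ the only neighbor of $i_s$ carrying a perturbation is $i_{s-1}$: any other perturbed neighbor would, by the radius-$t$ support bound, give $\hat{i}$ a walk to $i_s$ of length at most $s-1$, contradicting $d(\hat{i},i_s)=s$. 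Hence $\Delta_{k_0+s,i_s} = \Delta_{k_0+s-1,i_{s-1}}/(|\mathcal{N}_{i_s}|+1)$ and the induction closes. Taking $s=\tau'$ and setting $\Delta_{k_0+\tau',i_{\tau'}} = \hat{x}$ yields $\delta = \hat{x}\prod_{s=1}^{\tau'}(|\mathcal{N}_{i_s}|+1)$, which is exactly the broadcast value in the statement.

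The step I expect to be the main obstacle is making the support bookkeeping fully rigorous — i.e.\ proving the radius-$t$ containment of $\mathrm{supp}(\Delta_{k_0+t,\cdot})$ and using it both to cancel the gradient feedback and to isolate the single shortest path within $\tau'$ iterations; this is elementary but should be stated as an explicit lemma rather than waved through. A secondary point worth a remark: if there are several distinct shortest paths of length $\tau'$ from $\hat{i}$ to $i_{\tau'}$, then $i_{\tau'}$ accumulates a sum of contributions and the required amplification is smaller, so the displayed $\delta$ is the worst case (a unique shortest path) and in all cases an explicit, bounded perturbation achieving the added vector $\hat{x}$ suffices, which is the point of the theorem.
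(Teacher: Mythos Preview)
Your proposal is correct and follows essentially the same hop-by-hop induction along the shortest path as the paper's own proof, which likewise assumes a unique shortest path and shows that the perturbation is divided by $|\mathcal{N}_{i_s}|+1$ at each hop. Your coupling formulation (tracking the difference $\Delta_{k,i}$ between two executions) and your explicit treatment of why the gradient terms cancel at each $i_s$ and why $i_{s-1}$ is the sole perturbed neighbor at the relevant step are more careful than the paper's direct computation, which leaves those points implicit.
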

\begin{proof}
    We assume that there is only one path of $\tau'$ edges from node $\hat{i}$ to $i_{\tau'}$. We prove the theorem by mathematical induction.

    If $\tau' = 1$, the two nodes are neighbors. Then, the estimate of node $i_1$ at $k_0 + 1$ iteration is
    \begin{align}
        &\hat{x}_{k_0+1, i_1} =  \frac{1}{|\mathcal{N}_{i_1}|+1}(x_{k_0, i_1} + \sum_{j\in \mathcal{N}_{i_1}/{\hat{i}}}x_{k_0,j} + x_{k_0, \hat{i}} + \hat{x}(|\mathcal{N}_{i_1}| \\\nonumber
                                & +1))- \lambda \nabla l(x_{k_0,i_1},\xi_{k_0,i_1}) \\
                    &= \frac{1}{|\mathcal{N}_{i_1}|+1}(x_{k_0, i_1} + \sum_{j\in \mathcal{N}_{i_1}}x_{k_0,j}) - \lambda \nabla l(x_{k_0,i_1},\xi_{k_0,i_1}) + \hat{x}
    \end{align}
    It proves that the theorem is true when $\tau' = 1$.

    We now assume the theorem is true when $\tau' = k$, i.e. node $\hat{i}$ sends
    \begin{equation}
        \hat{x}_{k_0, \hat{i}} = x_{k_0, \hat{i}} + \hat{x}\prod_{s=1}^{k}(|\mathcal{N}_{i_s}|+1).
    \end{equation}
    to its neighbors at $k_0$-th iteration. Then, at $(k_0 +k)$-th iteration, node $i_k$'s estimate is
    \begin{equation}
        \hat{x}_{k_0+k, i_k} = x_{k_0+k, i_k} + \hat{x}
    \end{equation}
    where $x_{k_0+k, i_k}$ is the benign estimate that node $\hat{i}$ should send to its neighbors when it was not controlled.

    Consider the $k+1$ case. Node $\hat{i}$ sends
    \begin{equation}
        \hat{x}_{k_0, \hat{i}} = x_{k_0, \hat{i}} + \hat{x}\prod_{s=1}^{k+1}(|\mathcal{N}_{i_s}|+1)
    \end{equation}
    to its neighbors at the $k_0$-th iteration. At the $(k_0 +k)$-th iteration, the estimate of node $i_k$ is
    \begin{equation}
        \hat{x}_{k_0+k, i_k} = x_{k_0+k, i_k} + \hat{x}(|\mathcal{N}_{i_{k+1}}|+1).
    \end{equation}

    Then, at the $(k_0 +k+1)$-th iteration, the estimate of node $x_{k+1}$ is affected:
    \begin{align}\nonumber
        &\hat{x}_{k_0+k+1, i_{k+1}}\\\nonumber
        &=  \frac{1}{|\mathcal{N}_{i_{k+1}}|+1}(x_{k_0+k, i_{k+1}} + \sum_{j\in \mathcal{N}_{i_{k+1}}/{i_k}}x_{k_0+k,j} \\\nonumber
        &+ x_{k_0+k, i_k} + \hat{x}(|\mathcal{N}_{i_{k+1}}|+1)) - \lambda \nabla l(x_{k_0+k,i_{k+1}},\xi_{k_0+k,i_{k+1}}) \\\nonumber
                    &= \frac{1}{|\mathcal{N}_{i_{k+1}}|+1}(x_{k_0+k, i_{k+1}} + \sum_{j\in \mathcal{N}_{i_{k+1}}}x_{k_0+k,j}) \\
                    &- \lambda \nabla l(x_{k_0+k,i_{k+1}},\xi_{k_0+k,i_{k+1}}) + \hat{x}
    \end{align}
\end{proof}
\section{Byzantine-resilient Solution}\label{sec:algorithm}

\subsection{Byzantine-resilient Aggregation Rule.}

Due to the Byzantine threat of decentralized learning, it is necessary to design a robust aggregation rule to defeat Byzantine nodes. This rule
should guarantee that all benign nodes converge to the optimal estimate learned without Byzantine nodes.
In the following, we propose \AlgName, a novel aggregation rule for decentralized systems to satisfy the above requirement and uniformly defend against Byzantine attacks.

\subsection{\AlgName}
The design of \AlgName is motivated by three observations. First, as introduced in Section \ref{sec:literature}, existing Byzantine defenses for centralized systems have
certain security vulnerabilities or practical limitations. Such design flaws
still exist when we extend the solutions to decentralized scenarios. Second, a
decentralized system has higher convergence requirement than a centralized system:
convergence of one parameter server enforced by the solutions cannot guarantee
the convergence of all benign nodes in a decentralized system. Third, centralized
Byzantine-resilient solutions usually assume a fixed number of faulty nodes connected
to the parameter server, while in a decentralized system, the number of faulty nodes
connected to each benign node varies significantly. As such, it is necessary to have
a more robust Byzantine-resilient solution that can defeat \emph{an arbitrary number} of
Byzantine nodes and guarantee convergence of \emph{each benign node} in decentralized
systems.

\begin{figure}[t]
	\centering
	\subfigure{\includegraphics[width=\columnwidth]{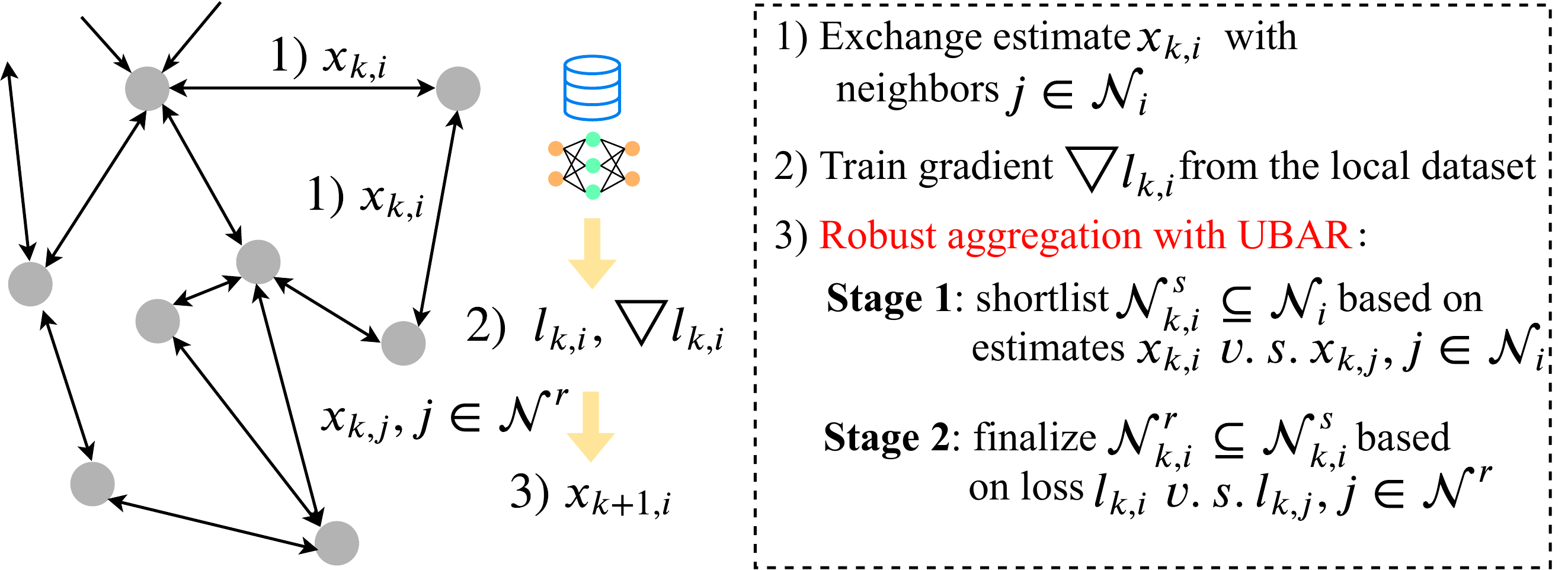}}
	\vspace{-7pt}
	\caption{A decentralized learning system with \AlgName.}
	\label{fig:ubar}
\end{figure}

\AlgName aims to achieve this goal and overcome the above limitations. It consists of two stages for each
training iteration as shown in Fig. \ref{fig:ubar}. At the first stage, each benign node selects a candidate pool of potential benign nodes from its neighbors. The selection is made by comparing the Euclidean distance of the estimate of each neighbor node with its own estimate. One innovation of this stage is the benign node uses its own parameter as the baseline value instead of the median or mean value of its neighbors' parameters as in centralized PS-based systems \cite{blanchard2017machine,xie2018generalized,yin2018byzantine}. This is based on the unique feature of decentralized systems that each node is responsible for both training and aggregation. It gives stronger Byzantine resilience as the baseline values trained from local datasets can never be manipulated by Byzantine nodes, while the aggregated parameter can be poisoned according to Theorem \ref{lemma:attack} Although after this stage, the candidate pool might still contain Byzantine nodes as the distance-based strategies are not strict Byzantine-resilient, it indeed reduces the scope of benign nodes for further selection.

At the second stage, each benign
node further picks the final nodes from the candidate pool
for estimate update. It reuses the training sample as the validation set to test the performance (i.e., loss function value) of each estimate. It selects the estimates whose loss
values are smaller than its own estimate, and calculates the average of those estimates as
the final updated value. One novelty of this stage is the adoption of training samples for performance evaluation of neighbors' parameters. In contrast, prior works in centralized systems require the PS to have an extra validation dataset for evaluation, which may not be applicable in certain scenarios.

It is interesting to note that the selection criteria at Stage 1 is still vulnerable to advanced attacks \cite{mhamdi2018hidden,baruch2019little,fang2019local}, while the strategy at Stage 2 has efficiency and scalability issues especially when the connectivity is high. By integrating them into one approach, Stage 2 can help Stage 1 further defeat the advanced attacks, while Stage 1 can reduce the computation cost at Stage 2, as it decreases the size of candidate nodes for evaluation. \AlgName can be formally described as below:
\begin{definition}{(\AlgName)}
   Let $x_{k,i}$ be the estimate of node $i$ at the $k$-th iteration; $l_{k,i}$ be the loss of the estimates on the stochastically selected data sample, i.e., $l_{k,i}=l(x_{k,i},\xi_{k,i})$; $\rho_{i}$ be the ratio of benign neighbors of node $i$. The proposed \textbf{U}niform \textbf{B}yzantine-resilient \textbf{A}ggregation \textbf{R}ule, \AlgName, is define as
   \begin{equation}
       \mathcal{R}_{\AlgName} = \begin{cases}
            \frac{1}{|\mathcal{N}_{k,i}^r|}\sum_{j \in \mathcal{N}_{k,i}^r}x_{k,j}, \ \text{if} \ \mathcal{N}_{k,i}^r\neq \emptyset\\
            x_{k, j^*}, \ \text{Otherwise}
        \end{cases}
   \end{equation}
   where
   \begin{align}\nonumber
        \text{(Stage 1)} \ &\mathcal{N}_{k,i}^s = \argmin_{\substack{\mathcal{N}^*\subseteq\mathcal{N}_{i} \\ |\mathcal{N}^*| = \rho_i|\mathcal{N}_i|}} \sum_{j\in \mathcal{N}^*} ||x_{k,j} - x_{k,i}||, \\ \nonumber
        \text{(Stage 2)} \ &\mathcal{N}_{k,i}^r = \bigcup_{\substack{ j \in \mathcal{N}_{k,i}^s\\ l_{k,j} \le l_{k,i}}}j, \ \text{and} \
        j^* = \argmin_{j\in \mathcal{N}_{k,i}^s}l_{k,j}.
    \end{align}
\end{definition}

Algorithm \ref{alg:proposed} details the training process of node $i$ using \AlgName in a decentralized system. The algorithm begins with the estimate $x_{0,i} = x_0$. At the $k$-th iteration, node $i$ broadcasts its estimate to and receives the estimates from its neighbors. It stochastically selects a training data sample $\xi_{k,i}$ and calculates the loss and the gradient (Lines \ref{line:sample}-\ref{line:gradient}). Then it conducts two-stage estimate selection.
First, it calculates the Euclidean distances between $x_{k,i}$ and the estimates from its neighbors and selects $\rho_i|\mathcal{N}_i|$ neighbors with lowest distances (Lines \ref{line:distance}-\ref{line:select}).

Second, for each estimate $x_{k,j}, j \in \mathcal{N}_{k,i}^s$, node $i$ calculates the loss of $x_{k,j}$ on $\xi_{k,i}$. It chooses the estimates that have similar or better performance than that of $x_{k,i}$ (Lines \ref{line:performance}-\ref{line:select2}). Finally it calculates the average value of the selected nodes and updates the final estimate using GUF (Lines \ref{line:agg}-\ref{line:end}).
\begin{algorithm}[tb]
    \SetAlgoLined
    \SetKwInOut{Input}{Input}
    \Input{Initial estimate $x_0$, learning rate $\lambda$, number of iterations $K$, ratio of benign nodes $\rho_i$}
    \For{$k$ in $[0, K)$}{
        Broadcast $x_{k, i}$ and receive $x_{k,j}$ from $j \in \mathcal{N}_i$\;
        Stochastically sample $\xi_{k,i}$ from $D_i$ \;\label{line:sample}
        $l_{k,i} \leftarrow l(x_{k,i},\xi_{k,i})$ and compute the local gradient $\nabla l_{k,i}$ \;\label{line:gradient}
        \For{$j$ in $\mathcal{N}_i$ \label{line:distance}}{
            $d_{i,j} \leftarrow ||x_{k,i} - x_{k, j}||$\;
        }
        $\mathcal{N}_{k,i}^s \leftarrow$ $\argmin_{\substack{\mathcal{N}^*\subseteq\mathcal{N}_{i} \\ |\mathcal{N}^*| = \rho_i|\mathcal{N}_i|}} \sum_{j\in \mathcal{N}^*} d_{i,j}$ \;\label{line:select}
        \For{$j \in \mathcal{N}_{k,i}^s$ \label{line:performance}} {
            $l_{k,j} \leftarrow l_{k,i}$\;
            \If{$l_{k,i} - l_{k,j} \ge 0$}{
                append $j$ to $\mathcal{N}_{k,i}^r$ \;
            }
        }
        \If{$\mathcal{N}_{k,i}^r \ is \ \emptyset$}{
            $j^* \leftarrow \argmin_{j \in \mathcal{N}^s_{k,i}} l_{k,j}$\;
            append $j^*$ to $\mathcal{N}_{k,i}^r $ \;\label{line:select2}
        }
        $\mathcal{R}_{k,i} \leftarrow \frac{1}{|\mathcal{N}_{k,i}^r|}\sum_{j \in \mathcal{N}_{k,i}^r}x_{k,j}$ \; \label{line:agg}
        Update the local estimate $x_{k+1, i} \leftarrow \alpha x_{k, i} + (1-\alpha)\mathcal{R}_{k,i} - \lambda \nabla l_{k,i}$ \; \label{line:end}
    }
    \Return{$x_{K, i}$}
	\caption{The training algorithm for each benign node $i$ using \emph{\AlgName}.}
    \label{alg:proposed}
\end{algorithm}

\subsection{Complexity Analysis}
The training process with \AlgName is performance efficient, as proved below:

\begin{proposition}{(Cost of \AlgName)}
    The computational complexity of \AlgName is $O(|\mathcal{N}_i|d)$ for each node at each iteration, where $d$ is the dimension of the estimate vector.
\end{proposition}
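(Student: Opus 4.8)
The plan is to bound the running time of one iteration of Algorithm~\ref{alg:proposed} at node $i$ line by line, and to show that every line costs either $O(d)$ or $O(|\mathcal{N}_i|d)$, so that the sum over the constantly many lines is $O(|\mathcal{N}_i|d)$. I will adopt the standard modeling assumption that, for a fixed network architecture, evaluating the loss $l(x,\xi)$ on a single data sample and computing the associated stochastic gradient $\nabla l(x,\xi)$ each take $O(d)$ time, since a forward (resp.\ backward) pass is linear in the number $d$ of parameters. Broadcasting $x_{k,i}$ and receiving the $|\mathcal{N}_i|$ neighbor estimates is a communication cost of $O(|\mathcal{N}_i|d)$ that does not enter the computational complexity; drawing $\xi_{k,i}$ and computing $l_{k,i}$ and $\nabla l_{k,i}$ (Lines~\ref{line:sample}--\ref{line:gradient}) then costs $O(d)$ by the above assumption.

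For Stage~1, each Euclidean distance $d_{i,j}=\|x_{k,i}-x_{k,j}\|$ is computed in $O(d)$ time, and there are $|\mathcal{N}_i|$ of them (Line~\ref{line:distance}), for a total of $O(|\mathcal{N}_i|d)$. The key point here is that the combinatorial-looking selection $\mathcal{N}_{k,i}^s=\argmin_{|\mathcal{N}^*|=\rho_i|\mathcal{N}_i|}\sum_{j\in\mathcal{N}^*}d_{i,j}$ is \emph{not} an exponential search over subsets: it amounts to picking the $\rho_i|\mathcal{N}_i|$ smallest scalars among $\{d_{i,j}\}_{j\in\mathcal{N}_i}$, which is done by sorting in $O(|\mathcal{N}_i|\log|\mathcal{N}_i|)$ (or by a linear-time selection routine). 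Since $\log|\mathcal{N}_i|=O(d)$ for any realistic deep model, this is dominated by $O(|\mathcal{N}_i|d)$, so Stage~1 as a whole costs $O(|\mathcal{N}_i|d)$.

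For Stage~2, $|\mathcal{N}_{k,i}^s|=\rho_i|\mathcal{N}_i|\le|\mathcal{N}_i|$, and for each $j\in\mathcal{N}_{k,i}^s$ the loss $l_{k,j}=l(x_{k,j},\xi_{k,i})$ is evaluated in $O(d)$ time, for a total of $O(|\mathcal{N}_i|d)$; the scalar comparisons building $\mathcal{N}_{k,i}^r$, and the fallback $\argmin$ producing $j^*$ when $\mathcal{N}_{k,i}^r=\emptyset$, add only $O(|\mathcal{N}_i|)$. Forming $\mathcal{R}_{k,i}=\frac{1}{|\mathcal{N}_{k,i}^r|}\sum_{j\in\mathcal{N}_{k,i}^r}x_{k,j}$ is a sum of at most $|\mathcal{N}_i|$ vectors in $\mathbb{R}^d$, hence $O(|\mathcal{N}_i|d)$, and the final GUF update $x_{k+1,i}=\alpha x_{k,i}+(1-\alpha)\mathcal{R}_{k,i}-\lambda\nabla l_{k,i}$ costs $O(d)$. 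Adding the per-line bounds yields $O(|\mathcal{N}_i|d)$ for one iteration at node $i$, which is the claim. I do not expect a genuine obstacle; the only points needing care are making the $O(d)$ cost of a single loss/gradient evaluation explicit as an assumption, and observing that the Stage~1 selection is a scalar top-$k$ operation rather than a search over exponentially many subsets.
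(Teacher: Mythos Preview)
Your proof is correct and follows essentially the same three-operation decomposition as the paper (Stage~1 distance-based selection costing $O(|\mathcal{N}_i|d)$, Stage~2 loss evaluation costing $O(\rho_i|\mathcal{N}_i|d)$, and aggregation costing at most $O(\rho_i|\mathcal{N}_i|d)$, with $\rho_i\le 1$). You are simply more careful than the paper in making explicit the $O(d)$ cost assumption for a single loss/gradient evaluation and in noting that the Stage~1 $\argmin$ over subsets is just a scalar top-$k$ selection; the paper's proof leaves both of these points implicit.
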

\begin{proof}
   For node $i \in V_b$, at each iteration, \AlgName aggregates the received estimates with three operations. First, \AlgName selects $\rho_i|\mathcal{N}_i|$ neighbors that are closest to its current estimate. The cost is $O(|\mathcal{N}_i|d)$. Second, \AlgName calculates the loss of the selected estimates on the stochastic sample and the cost is $O(\rho_i|\mathcal{N}_i|d)$. Finally, \AlgName takes at most $O(\rho_i|\mathcal{N}_i|d)$ to aggregate the estimates with better performance. Since $\rho_i \leq 1$ for $\forall i \in V_b$, the overall computational complexity of \AlgName is $O(|\mathcal{N}_i|d)$.
\end{proof}

Compared to existing aggregation rules, the complexity of Average aggregation, Median-based and BRIDGE is $O(|\mathcal{N}_i|d)$, while Krum and Bulyan have a complexity of $O(|\mathcal{N}_i|^2d)$. So we conclude that \AlgName maintains the same performance efficiency as some solutions and performs much better than others, especially when the number of connected neighbor nodes becomes large.

\begin{table*}[t!]\centering
	\resizebox{\textwidth}{!}{
		\begin{tabular}{c@{\hskip3pt}c@{}c@{}c}
			\rotatebox{90}{$\ \ \ \ \ \ \ \ \  $MNIST}&\includegraphics[width=0.33\textwidth]{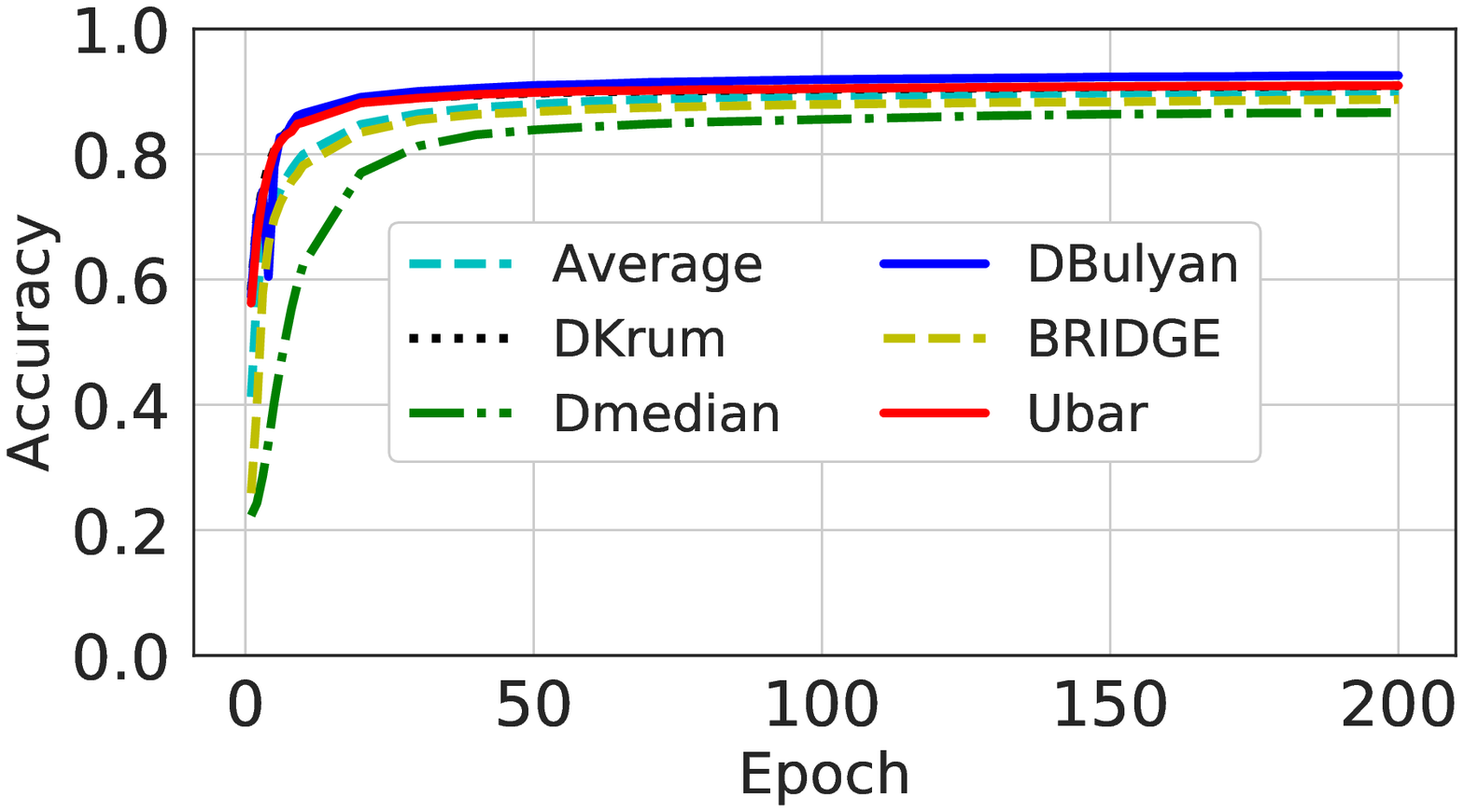}&\includegraphics[width=0.33\textwidth]{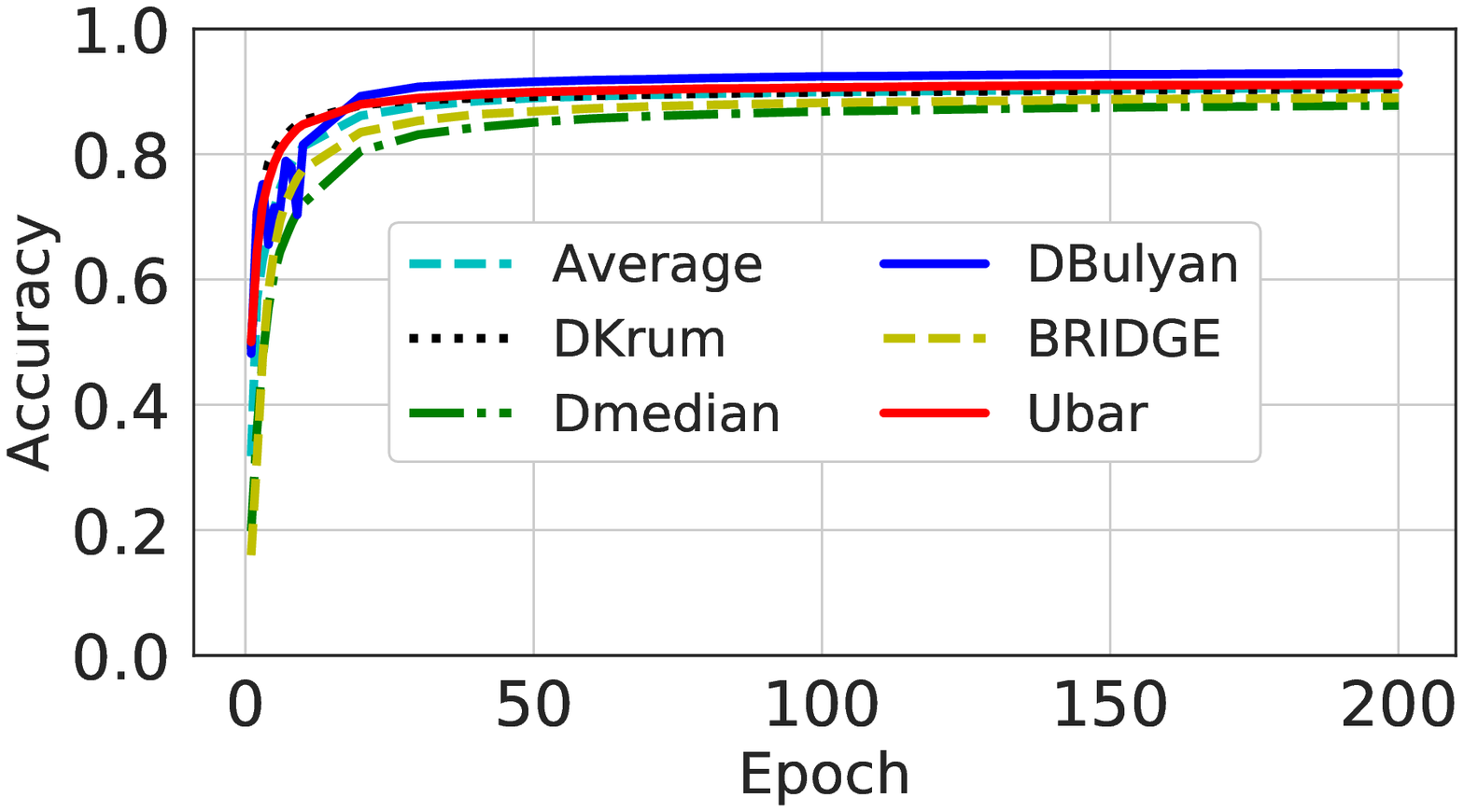}&\includegraphics[width=0.33\textwidth]{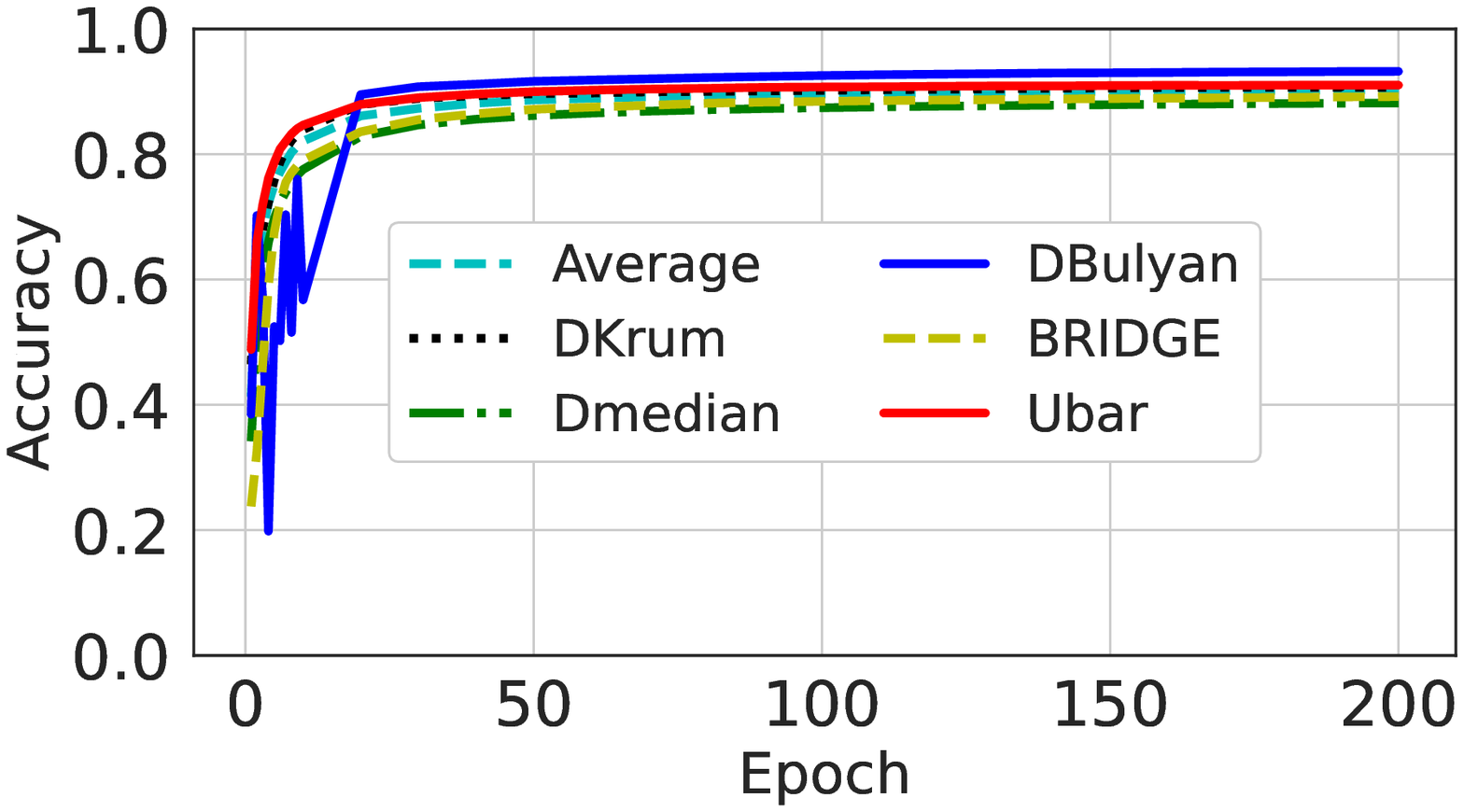}\\
			\rotatebox{90}{$\ \ \ \ \ \ \ \ \  $CIFAR10}&\includegraphics[width=0.33\textwidth]{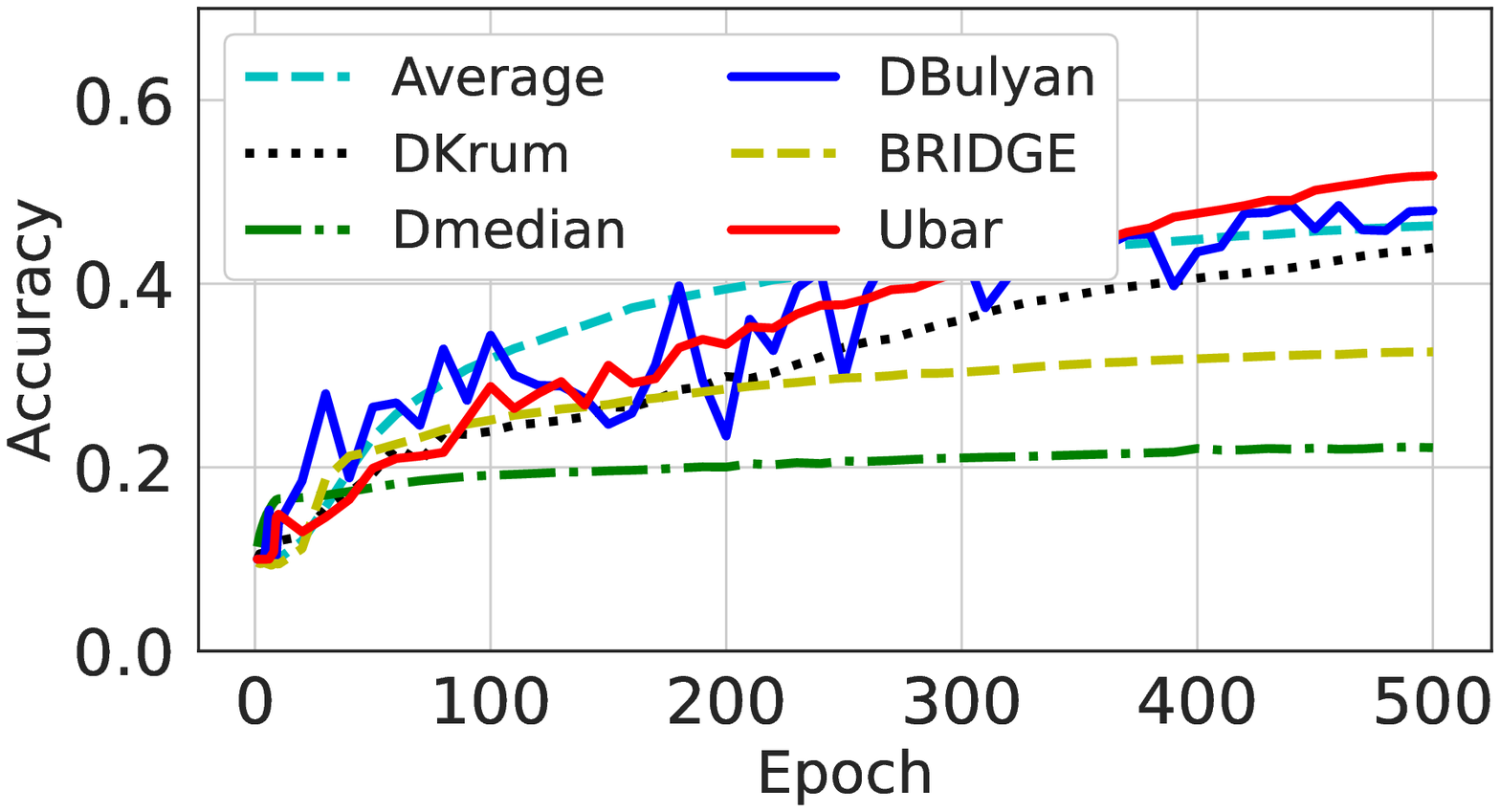}&\includegraphics[width=0.33\textwidth]{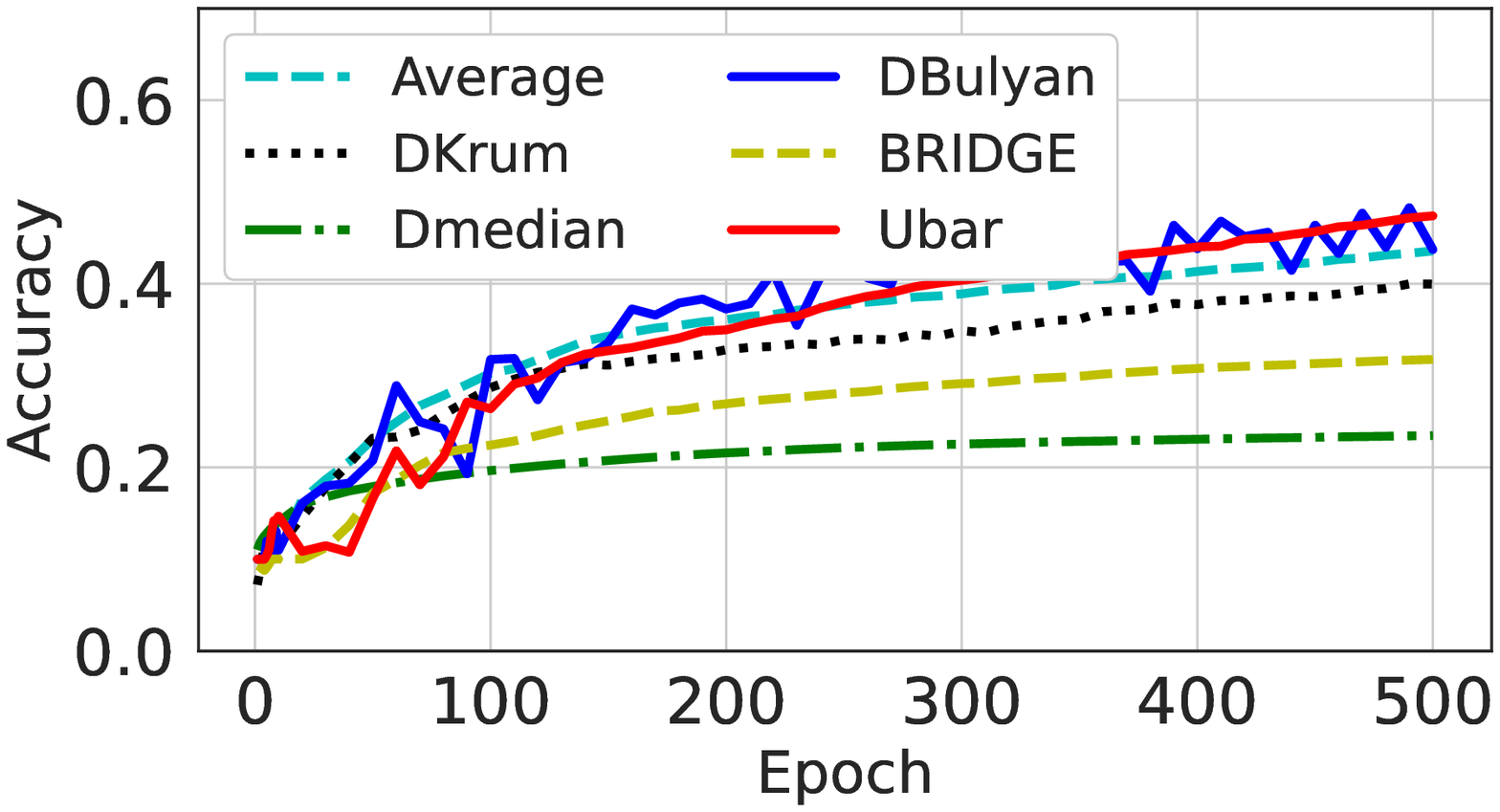}&\includegraphics[width=0.33\textwidth]{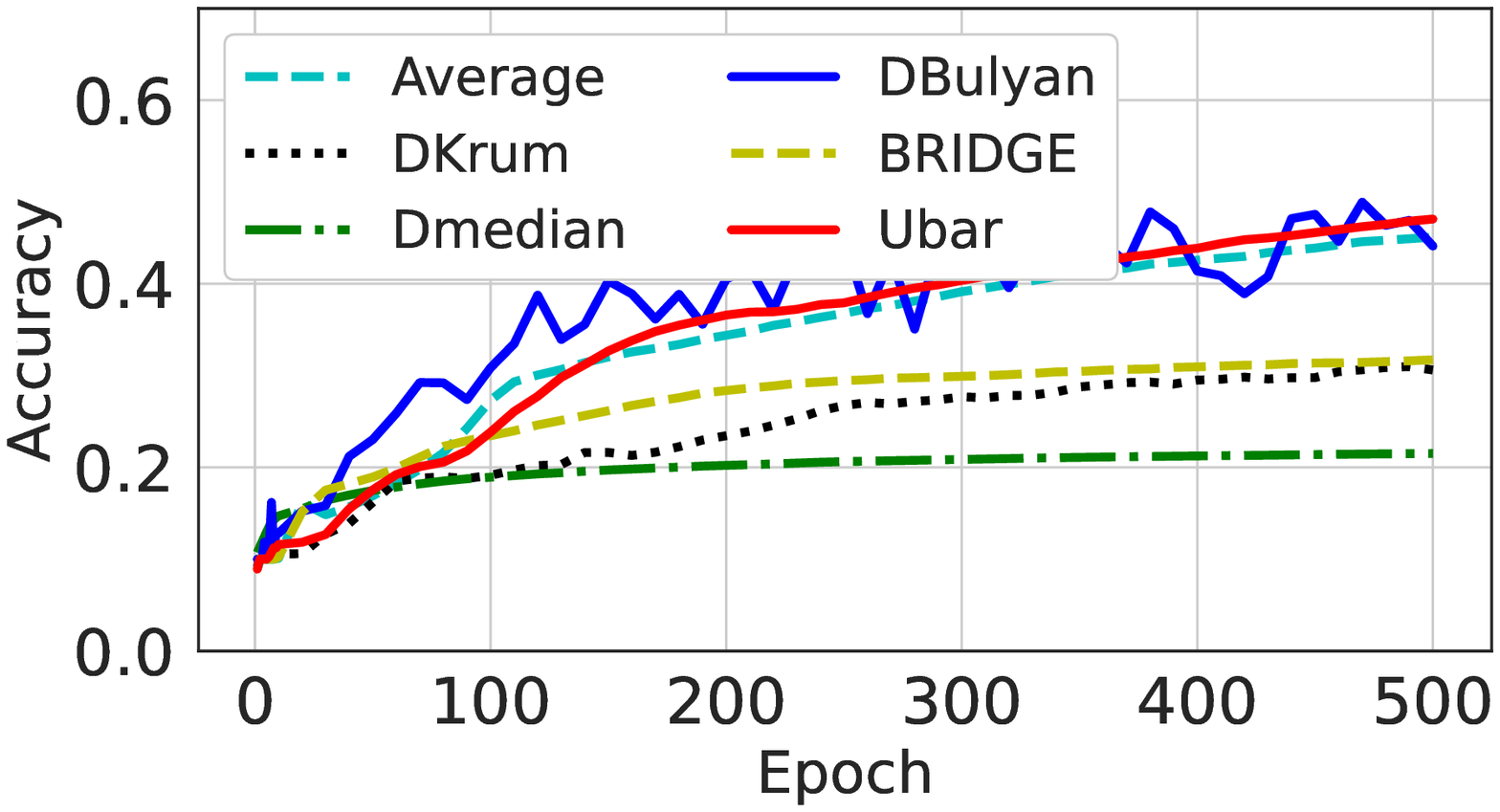}\\
			& (a) 30 nodes & (b) 50 nodes& (c) 70 nodes

	\end{tabular}}
	\captionof{figure}{The worst accuracy of the benign nodes with different network sizes.}\label{fig:cifar_vary_nodes}
\end{table*}

\begin{table*}[t!]\centering
	\resizebox{\textwidth}{!}{
		\begin{tabular}{c@{\hskip3pt}c@{}c@{}c}
			\rotatebox{90}{$\ \ \ \ \ \ \ \ \  $MNIST}&\includegraphics[width=0.33\textwidth]{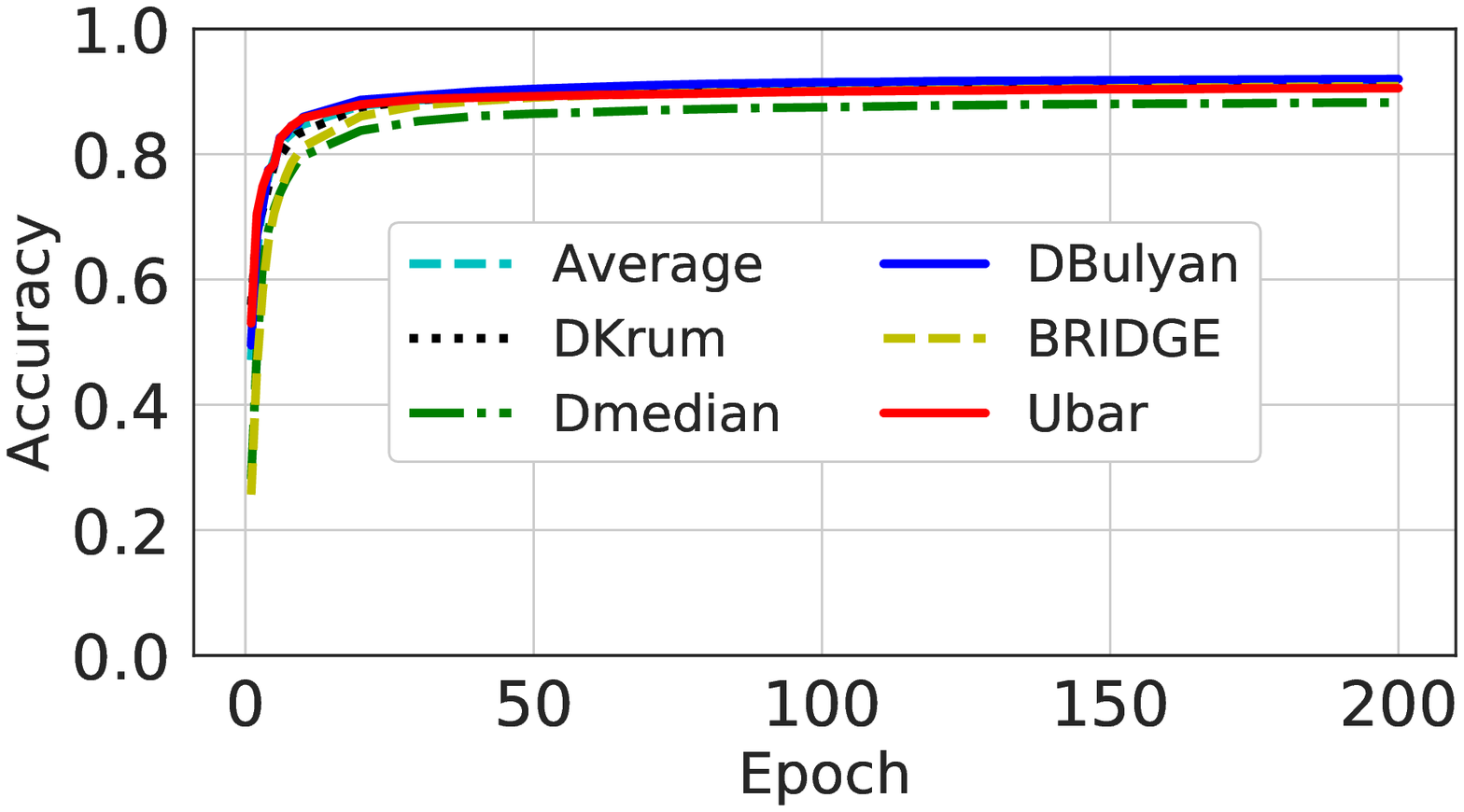}&\includegraphics[width=0.33\textwidth]{figs/mnist_accs30_200_0.4_0.0_hidden.eps}&\includegraphics[width=0.33\textwidth]{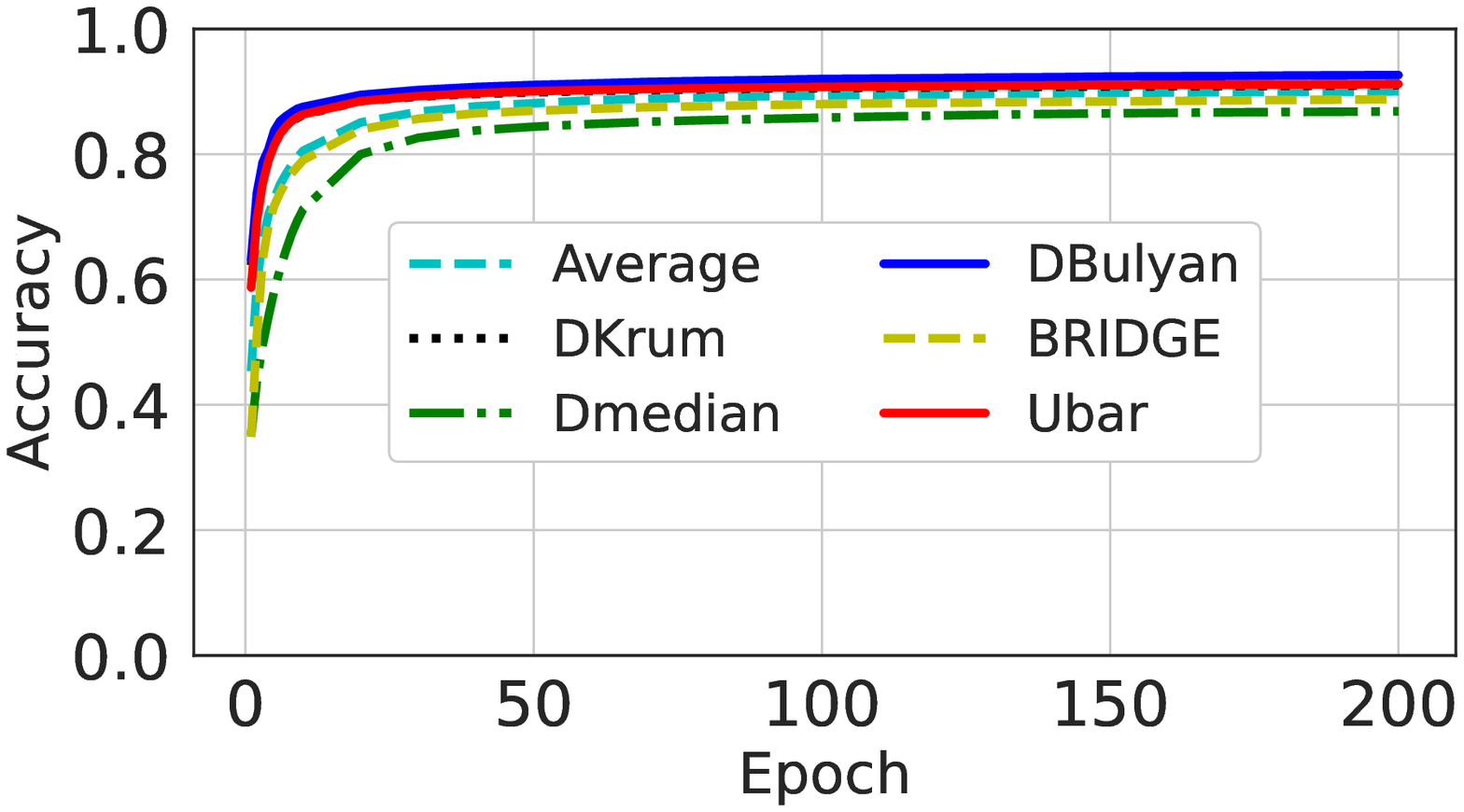}\\
			\rotatebox{90}{$\ \ \ \ \ \ \ \ \  $CIFAR10}&\includegraphics[width=0.33\textwidth]{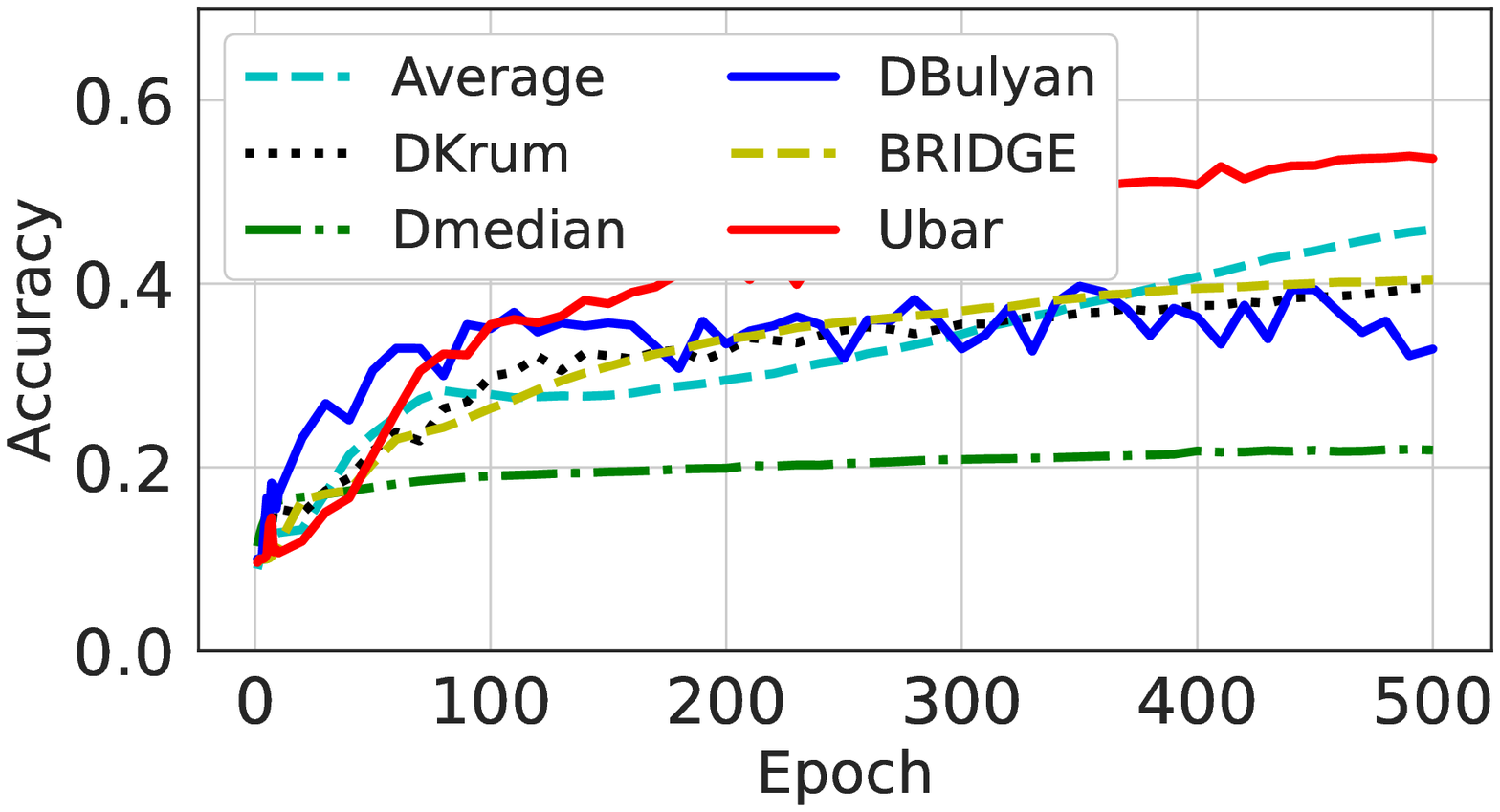}&\includegraphics[width=0.33\textwidth]{figs/cifar10_accs30_500_0.4_0.0_hidden.eps}&\includegraphics[width=0.33\textwidth]{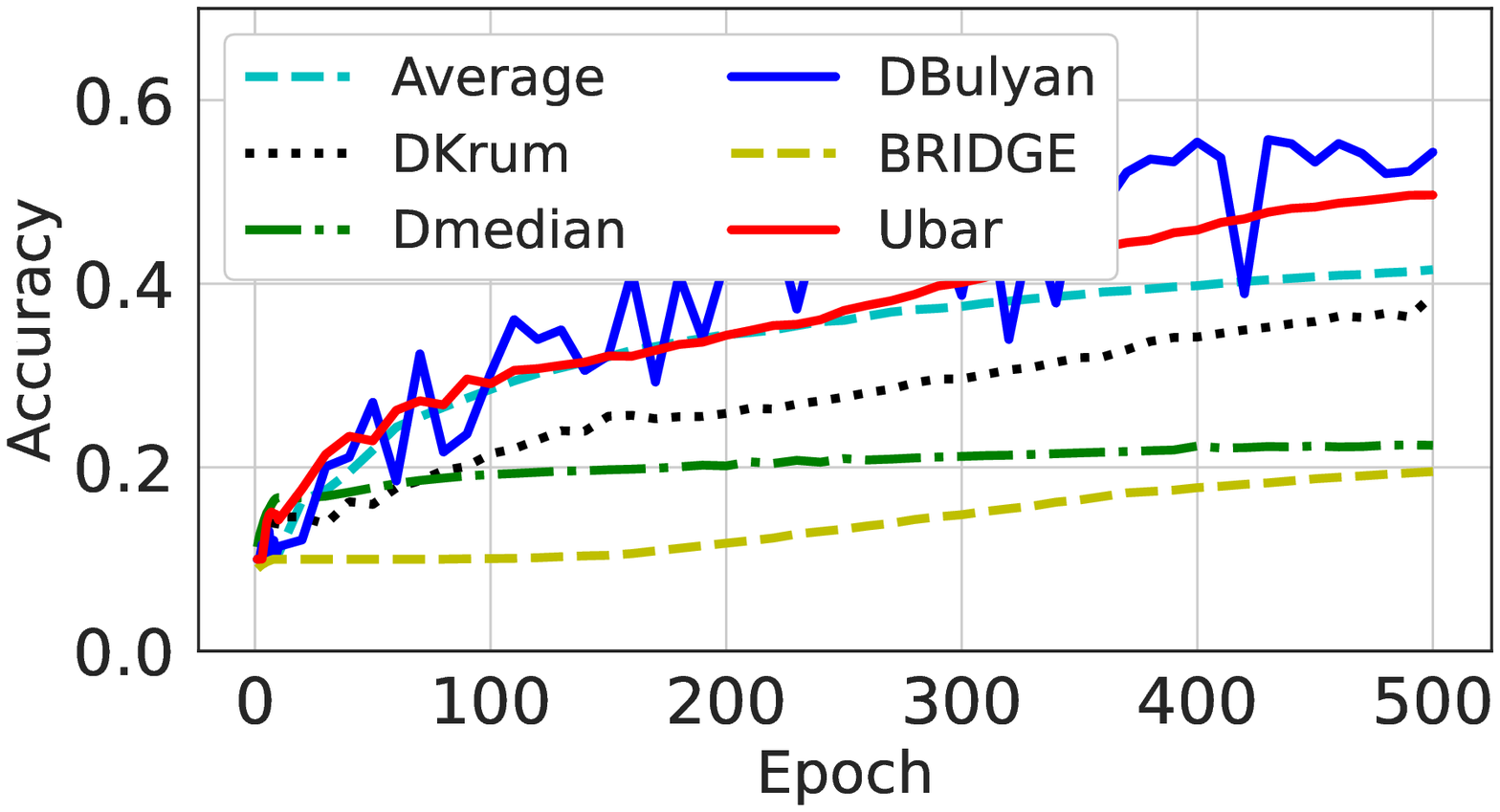}\\
			& (a) Connection ratio = 0.2 & (b) Connection ratio = 0.4& (c) Connection ratio = 0.6
	\end{tabular}}
	\captionof{figure}{The worst accuracy of the benign nodes with different connection ratios.}\label{fig:cifar_vary_connection}
\end{table*}

\section{Experiments}\label{sec:experiments}
\subsection{Experimental Setup and Configurations}
\bheading{Datasets.} We evaluate our defense solution with a DL-based image classification task. Specifically, we train a Convolutional Neural Network (CNN) over the  MNIST and CIFAR10 datasets \cite{krizhevsky2009learning}. This CNN includes two max–pooling layers and three fully connected layers \cite{mhamdi2018hidden}. We adopt a batch size of 256 and a fading learning rate $\lambda(k) = \lambda_0\frac{20}{20+k}$ where $k$ is the number of epochs and the initial learning rate $\lambda_0 = 0.05$. %We have also conducted the same evaluations on a simpler dataset and the experimental results lead to the same conclusion.

\bheading{Implementation of decentralized systems.} The network topology of the decentralized system in our consideration is defined by a connection rate between the nodes. A connection rate is the probability that a node is connected to another node. To ensure the connectivity assumption, we first generate a decentralized network in which all the nodes strictly follow the learning procedure. Then we randomly add Byzantine nodes to the network. To simulate various adversarial environments, we adopt a new parameter, Byzantine ratio, which is defined as the number of Byzantine nodes divided by the number of all nodes in the network. We assume that the Byzantine ratio of node $i$ is lower than $1- \rho_i$. Without lose of generality, we set $\rho_i$ as 0.4 for all benign nodes. We train the deep learning model in a synchronous mode. We simulate the operations of the decentralized system by running the nodes serially at each iteration. All our experiments are conducted on a server equipped with Xeon Silver 4214 @2.20 GHz CPUs and a NVIDIA Tesla P40 GPU.

\bheading{Baselines.}
Since there are very few works focusing on decentralized learning systems, we can extend existing aggregation rules from centralized systems to the
decentralized case as our baselines, since the estimates and gradients have the same dimensionality and network
structure. We consider three popular Byzantine-resilient solutions: Krum
\cite{blanchard2017machine}, marginal median \cite{xie2018generalized,yin2018byzantine}, and Bulyan
\cite{mhamdi2018hidden}. %Other solutions can be extended in similar ways.

It is worth noting that in a centralized distributed system, the maximal number of
Byzantine workder nodes connected to the parameter server is usually assumed. This does not
hold in a decentralized system, as the number of Byzantine nodes connected to each benign node
varies greatly. To meet this assumption, we approximately
calculate the number of Byzantine nodes allowed for each benign node in Equation
\ref{eq:num-fault}. In this equation, $\hat{n}_{i}$ is the number of Byzantine nodes connected to node $i$, $\rho_{central}$ is the maximal ratio of Byzantine nodes allowed in a centralized distributed defense, and $\lceil \cdot \rceil$ is the ceil function.

\begin{equation}
\label{eq:num-fault}
   \hat{n}_{i} =\lceil|\mathcal{N}_i|\cdot\min\{1-\rho_i, \rho_{central}\}\rceil
\end{equation}

\bheading{DKrum.} Similar to Krum, let $j \neq j'$ be two neighbors of node $i$ and we denote $j \rightarrow j'$ the $|\mathcal{N}_i| - \hat{n}_{i} + 2$ closest estimate vectors to the estimate of node $j$. Then, we calculate the score of each neighbor:

\begin{equation}\label{equ:krumscore}\nonumber
   s(j) = \sum_{j\rightarrow j'}||x_{j}- x_{j'}||^2
\end{equation}
We select the estimate with the minimal score as the aggregated estimate. Formally, the aggregated rule  DKrum is defined as
\begin{equation}\nonumber
    \mathcal{R}_{DKrum} = x_{j^*}, \ j^* = \argmin_{j\in \mathcal{N}_i}s(j).
\end{equation}

\bheading{Dmedian.} To apply the marginal median solution\cite{xie2018generalized,yin2018byzantine} to
decentralized systems, we only need to replace the gradients with the received
estimates. Specifically, the aggregated rule $Dmedian$ is defined as
\begin{equation}
   \mathcal{R}_{Dmedian} = \texttt{MarMed}\{x_{j}, j\in \mathcal{N}_i\}
\end{equation}
where $\texttt{MarMed}$ is the marginal median function defined in \cite{xie2018generalized,yin2018byzantine}. Informally, the $m$-th dimensional value of $\mathcal{R}_{Dmedian}$ is the median of the $m$-th dimensional elements of all estimates in $\mathcal{N}_i$.

\bheading{DBulyan.} At each iteration, node $i$ first recursively uses DKrum to select $|\mathcal{N}_i| - 2\hat{n}_{i}$ estimates, i.e., $\{x_j, j \in \mathcal{N}_i^{DKrum}\}$, where $|\mathcal{N}_i^{DKrum}| = |\mathcal{N}_i| - 2\hat{n}_{i}$. Specifically, node $i$ uses DKrum to select one estimate from its neighbors and deletes the corresponding node from its neighbors. Then, node $i$ recursively selects the remaining estimates using DKrum. Finally, it adopts a median-based method to aggregate the estimates in $\{x_j, j \in \mathcal{N}_i^{DKrum}\}$. Formally, the $m$-th coordinate of the aggregated estimate is calculated as
\begin{equation}
   \mathcal{R}_{DBulyan}[m] = \frac{1}{\beta}\sum_{j \in \mathcal{M}[m]}x_{j}[m]
\end{equation}
where $\beta =|\mathcal{N}_i| - 4\hat{n}_{i}$ and $\mathcal{M}[m]$ is the set of neighbors with the size of $\beta$. The sum of the $m$-th elements to its median is minimal among all subsets of $\mathcal{N}_i^{DKrum}$ with the size $\beta$.

In addition to the above defenses from centralized systems, we also implement BRIDGE \cite{yang2019bridge} for comparison, which is designed specifically for decentralized systems, i.e., BRIDGE \cite{yang2019bridge}.
% three are extended from existing popular defenses for centralized PS-based systems, i.e., DKrum \cite{blanchard2017machine}, Dmedian \cite{xie2018generalized,yin2018byzantine} and DBulyan \cite{mhamdi2018hidden} and one
For all these defenses, we set $\alpha$ to be 0.5 in the GUF. For baseline, we consider the same decentralized system configuration without Byzantine nodes, and using the Average Aggregation rule (Equation \ref{equ:original_ar}). The model trained from this setting can be regarded as the optimal one.

\bheading{Performance Metric.} For each defense deployed in the decentralized system, we measure the testing
accuracy of the trained model on each benign node, and report the worst accuracy among all nodes to
represent the effectiveness of this defense.

\begin{table*}[t]\centering
	\resizebox{\textwidth}{!}{
		\begin{tabular}{c@{\hskip3pt}c@{}c@{}c}
			\rotatebox{90}{$\ \ \ \ \ \ \ \ \  $MNIST}&\includegraphics[width=0.33\textwidth]{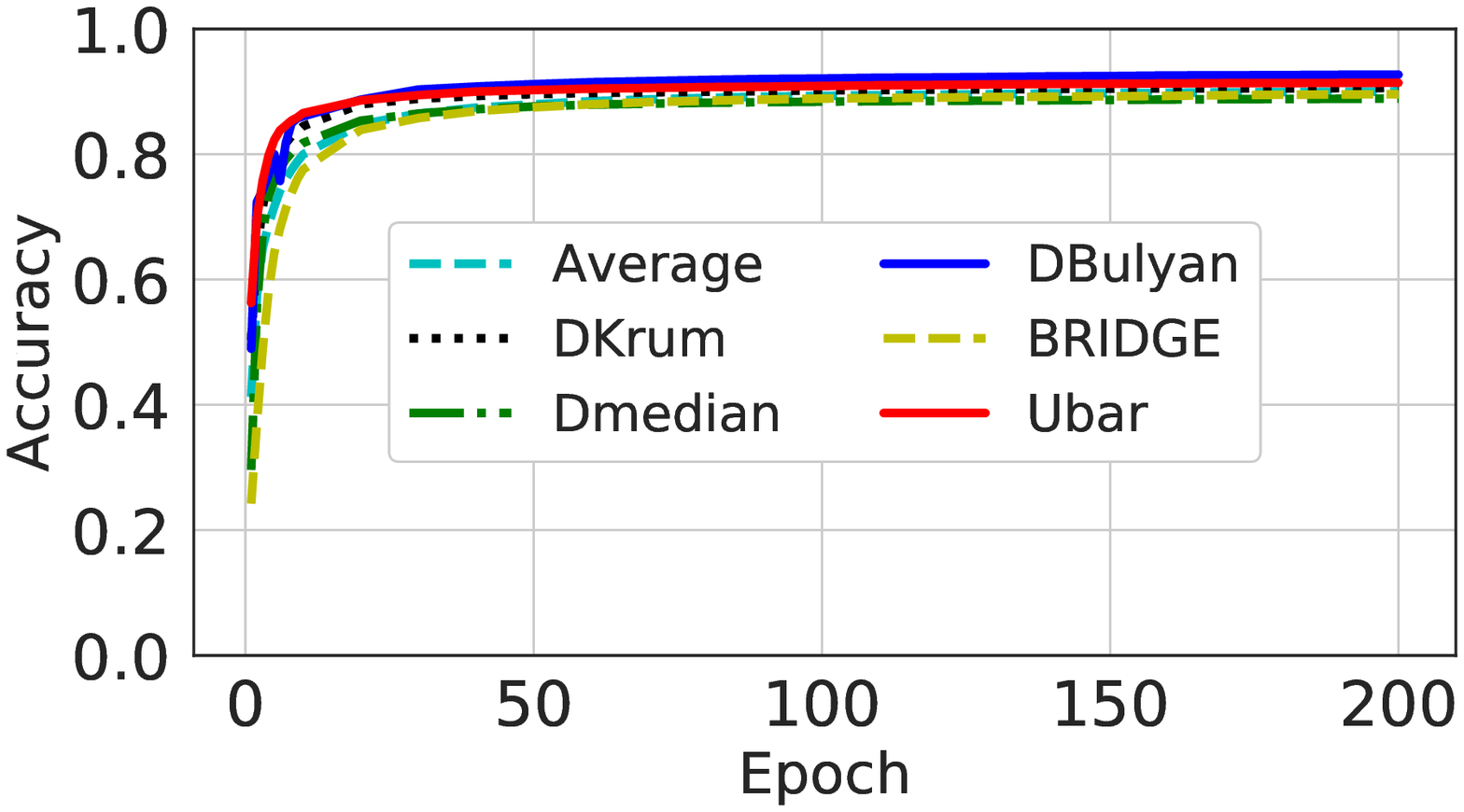}&\includegraphics[width=0.33\textwidth]{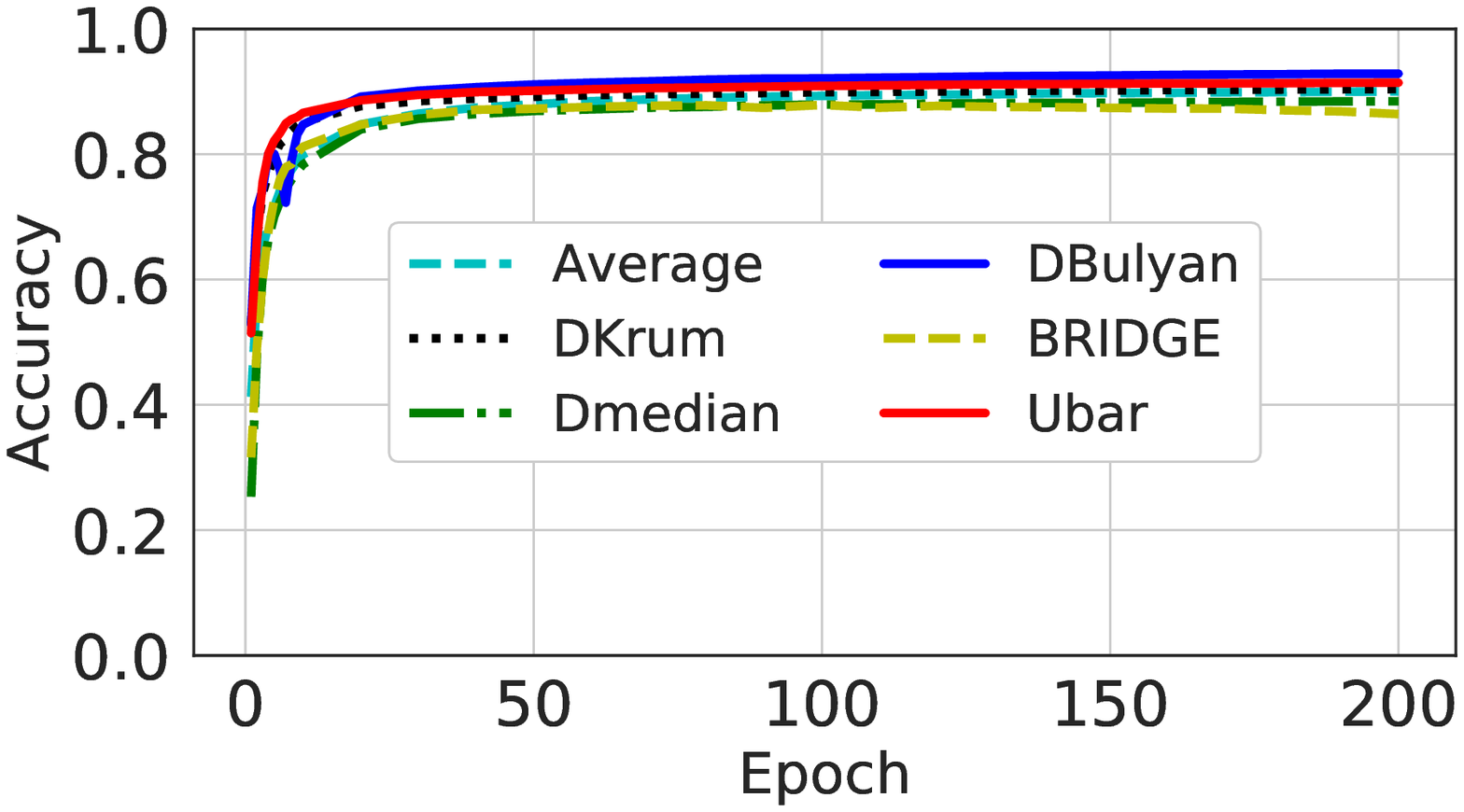}&\includegraphics[width=0.33\textwidth]{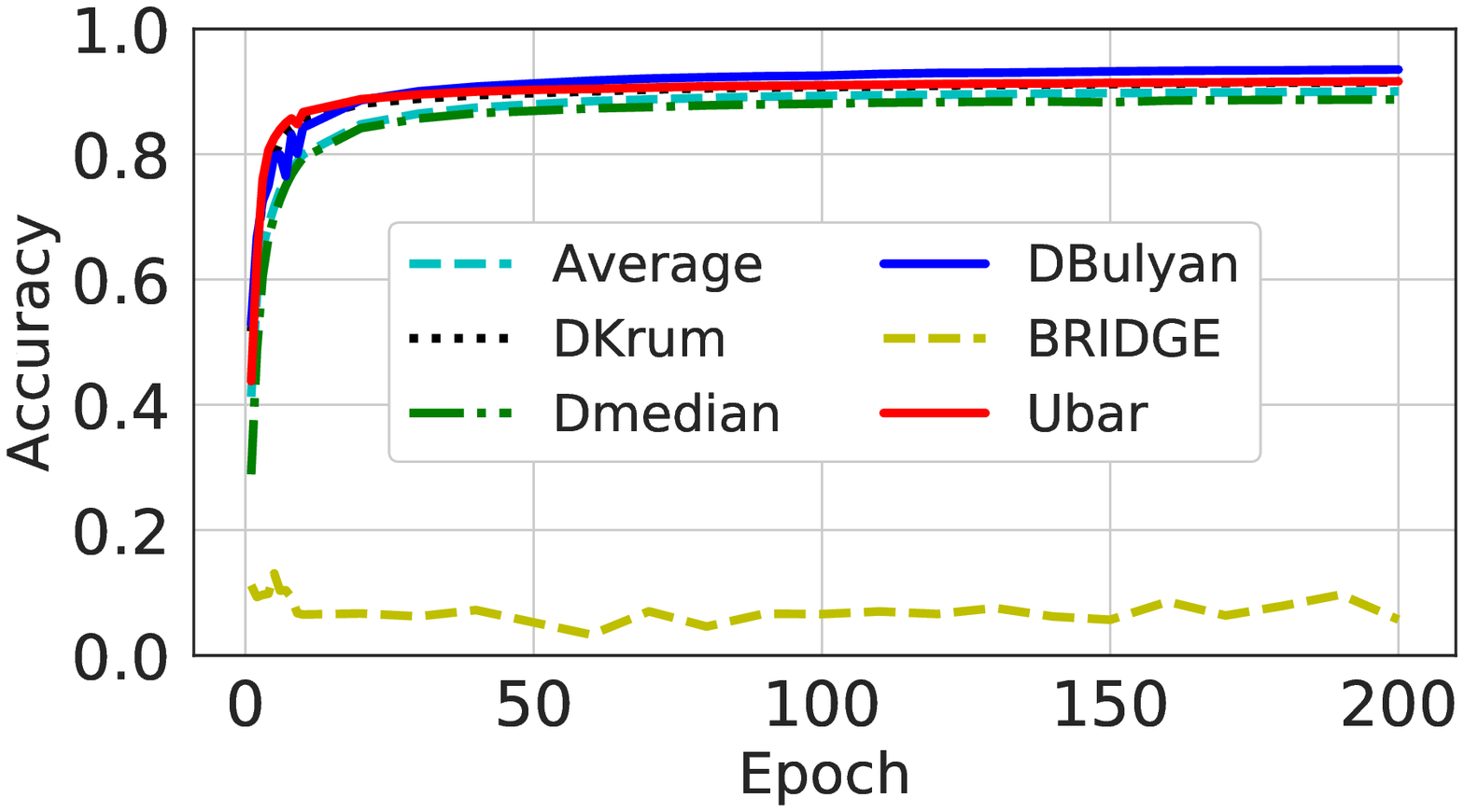}\\
			\rotatebox{90}{$\ \ \ \ \ \ \ \ \  $CIFAR10}&\includegraphics[width=0.33\textwidth]{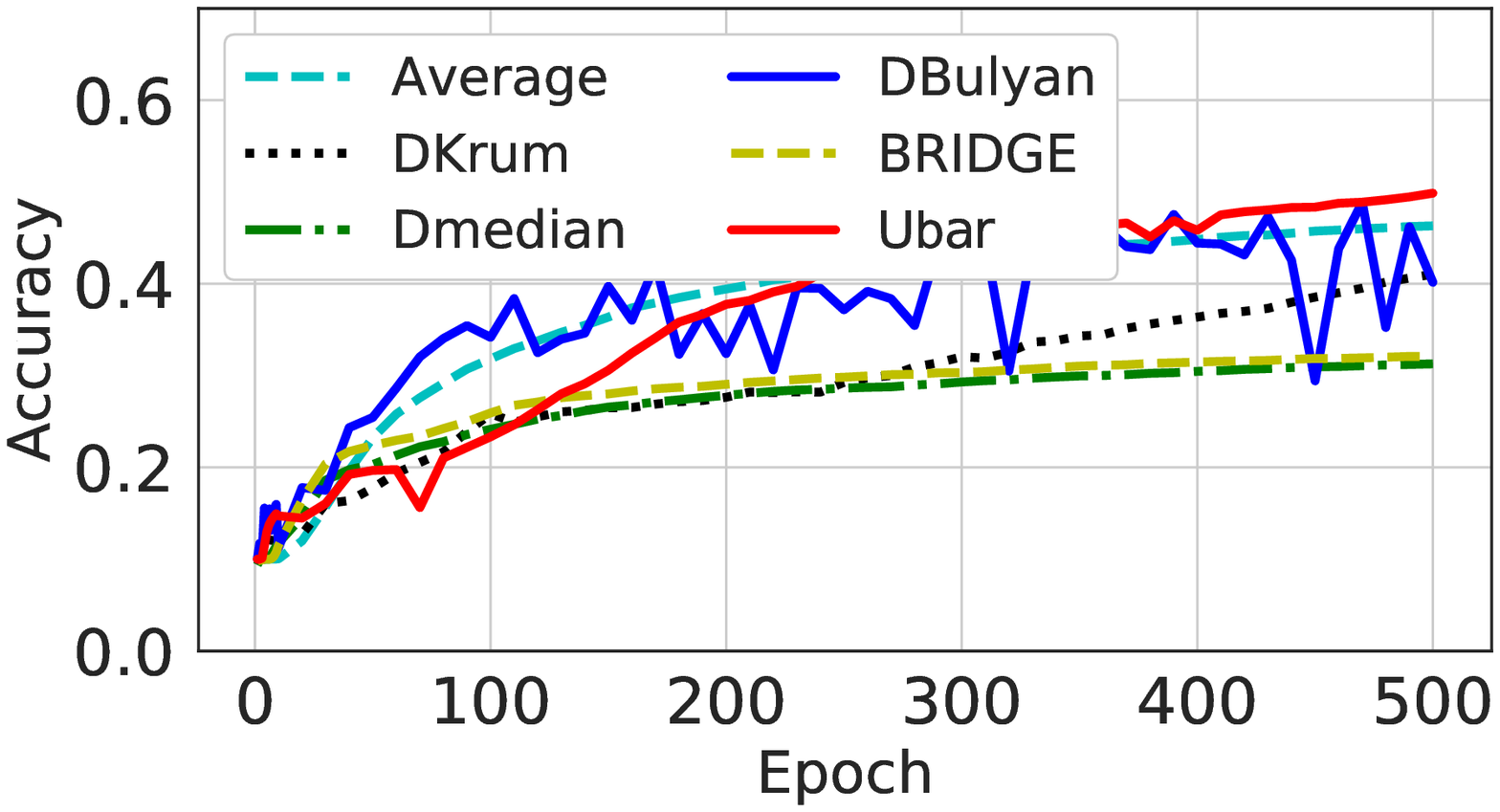}&\includegraphics[width=0.33\textwidth]{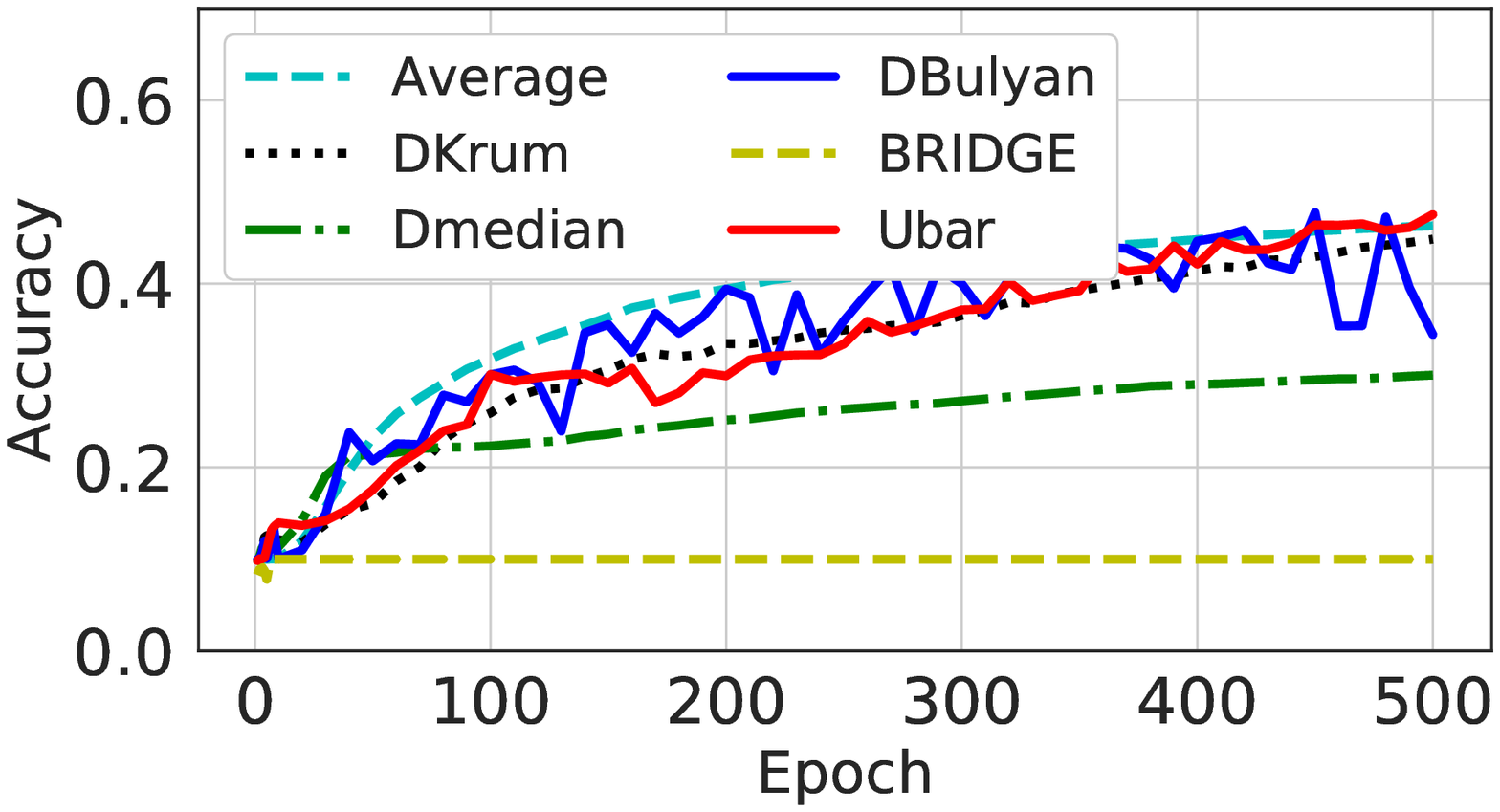}&\includegraphics[width=0.33\textwidth]{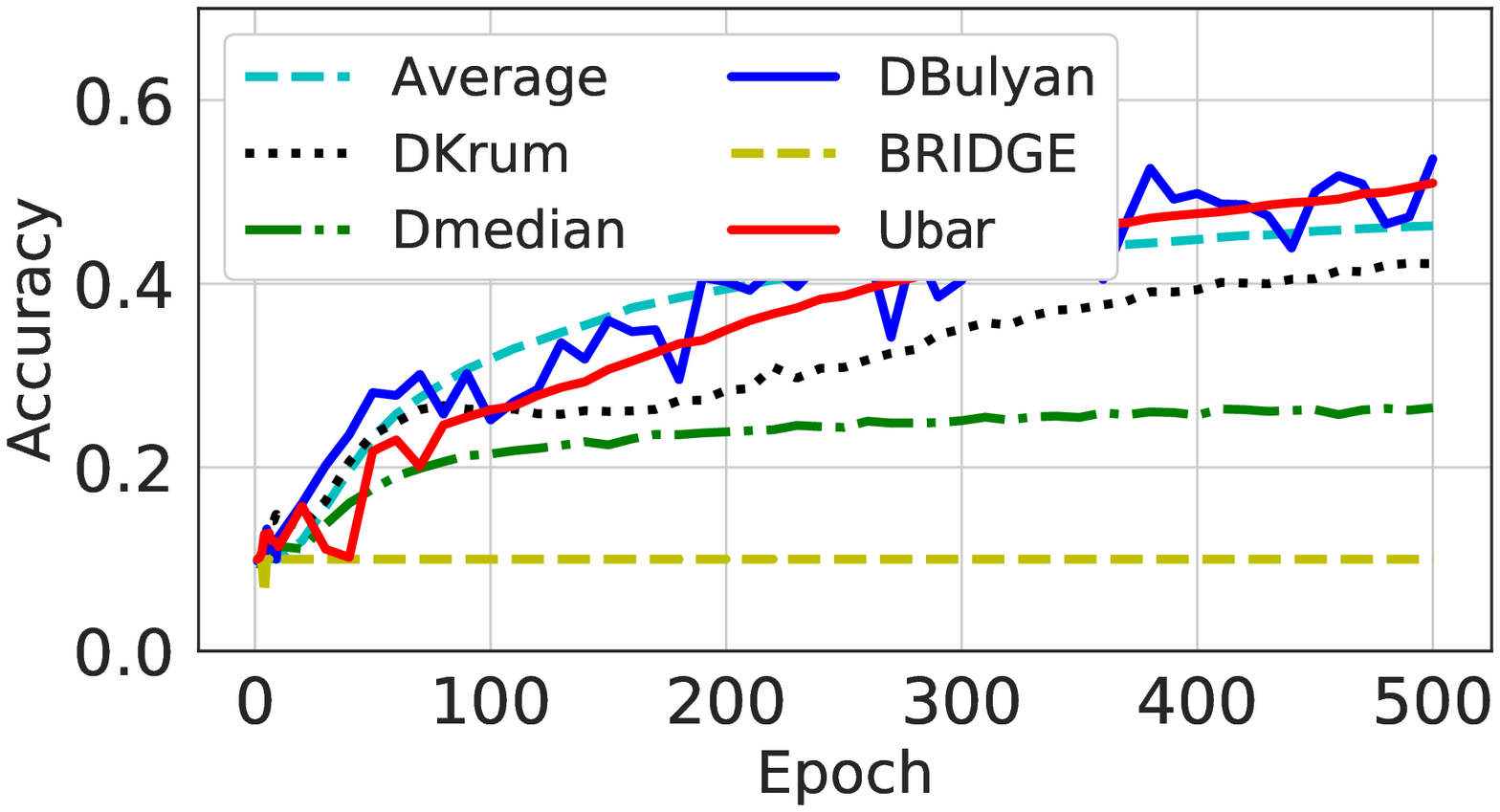}\\
			& (a) Byzantine ratio = 0.1 & (b) Byzantine ratio = 0.3& (c) Byzantine ratio = 0.5

	\end{tabular}}
	\captionof{figure}{The worst accuracy of the benign nodes under the Gaussian attack.}\label{fig:cifar_simple_attack}
\end{table*}

\begin{table*}[t]\centering
	\resizebox{\textwidth}{!}{
		\begin{tabular}{c@{\hskip3pt}c@{}c@{}c}
			\rotatebox{90}{$\ \ \ \ \ \ \ \ \  $MNIST}&\includegraphics[width=0.33\textwidth]{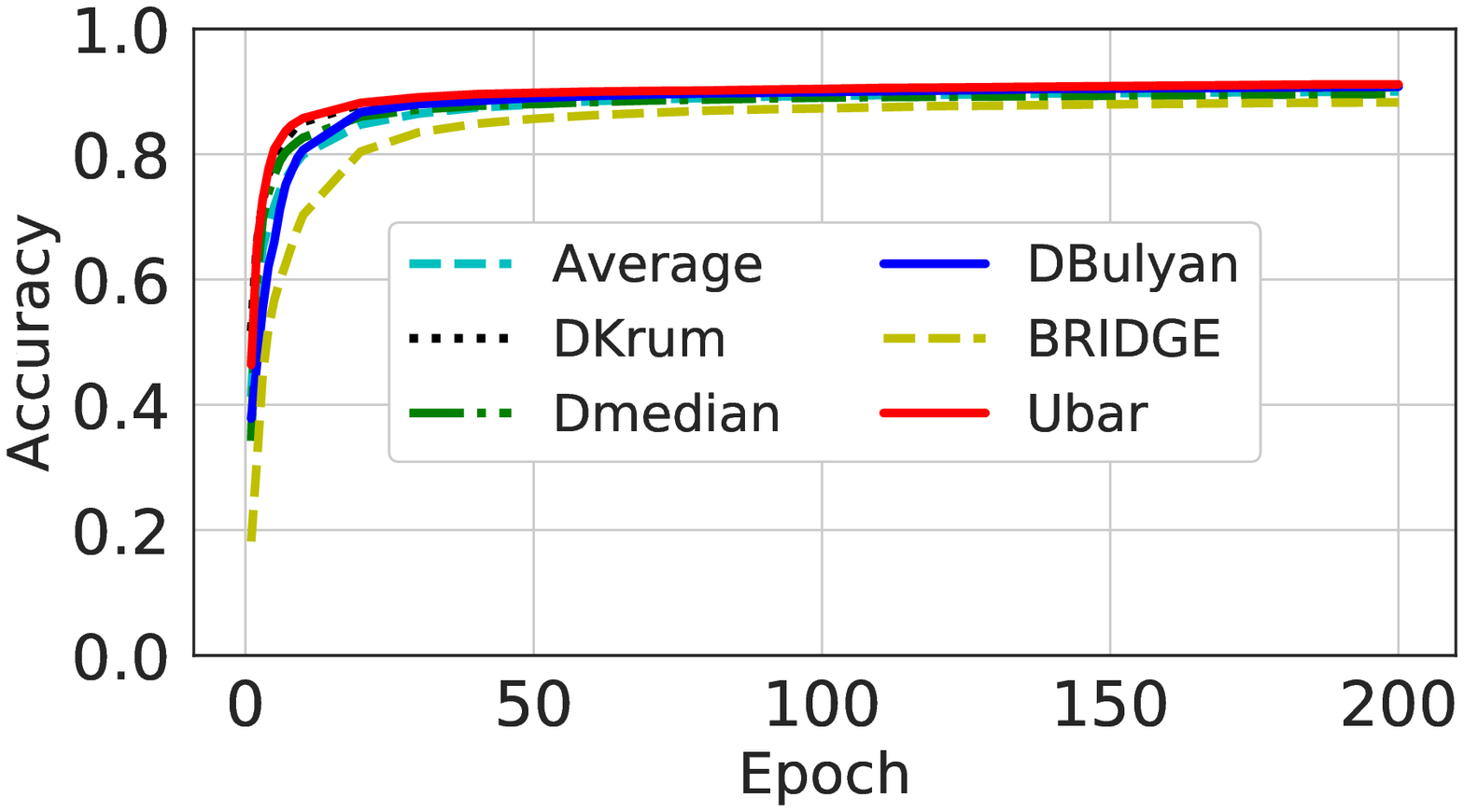}&\includegraphics[width=0.33\textwidth]{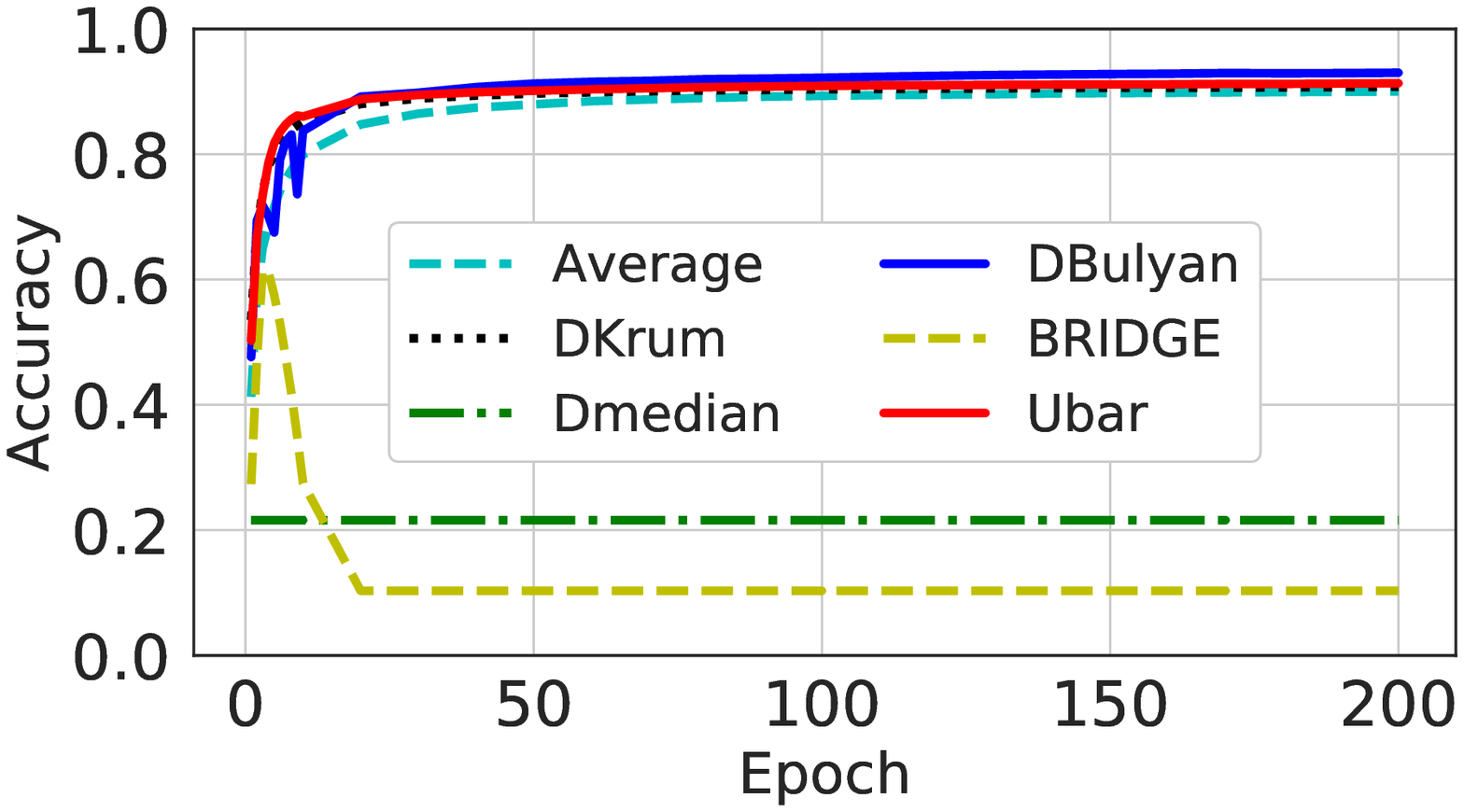}&\includegraphics[width=0.33\textwidth]{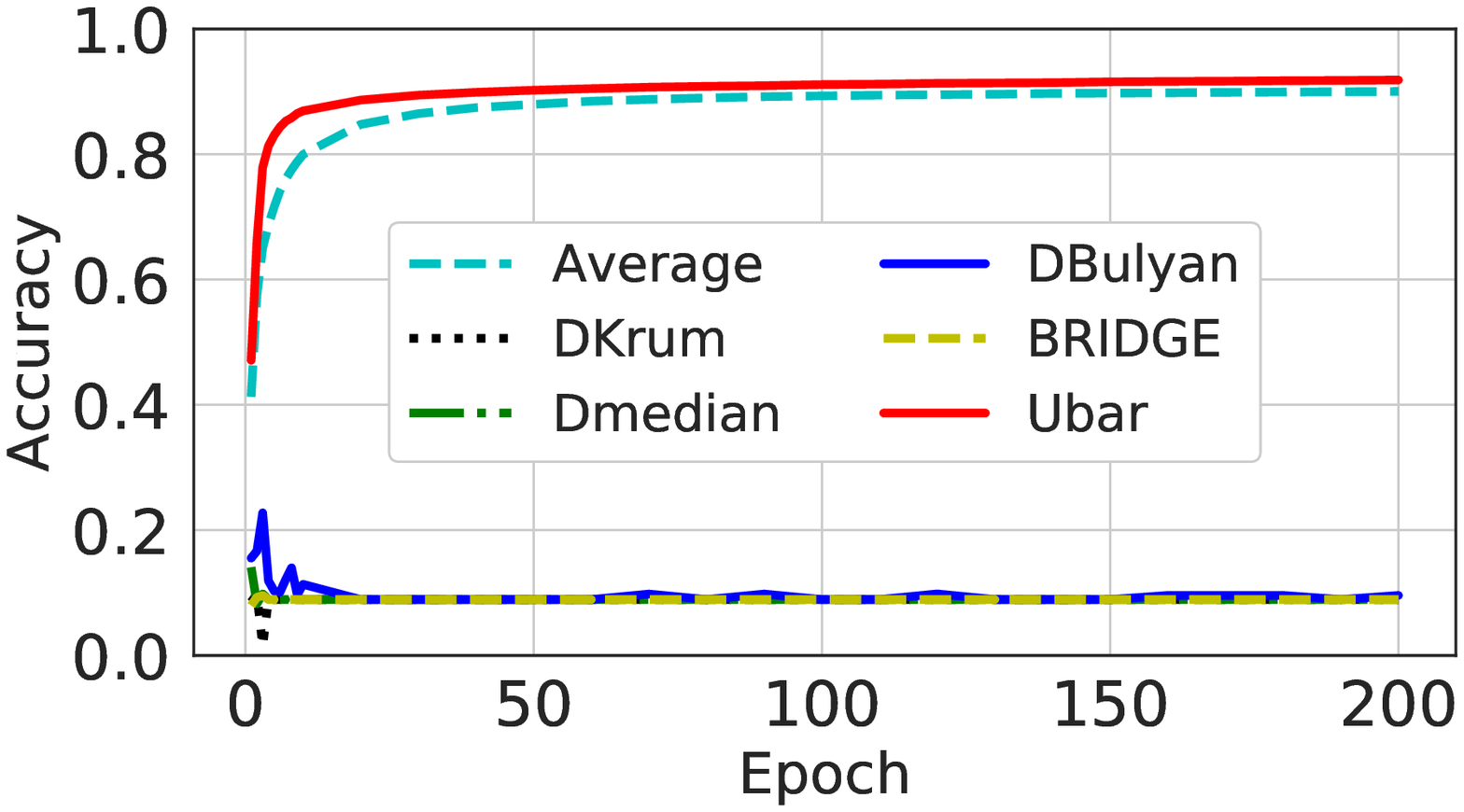}\\
			\rotatebox{90}{$\ \ \ \ \ \ \ \ \  $CIFAR10}&\includegraphics[width=0.33\textwidth]{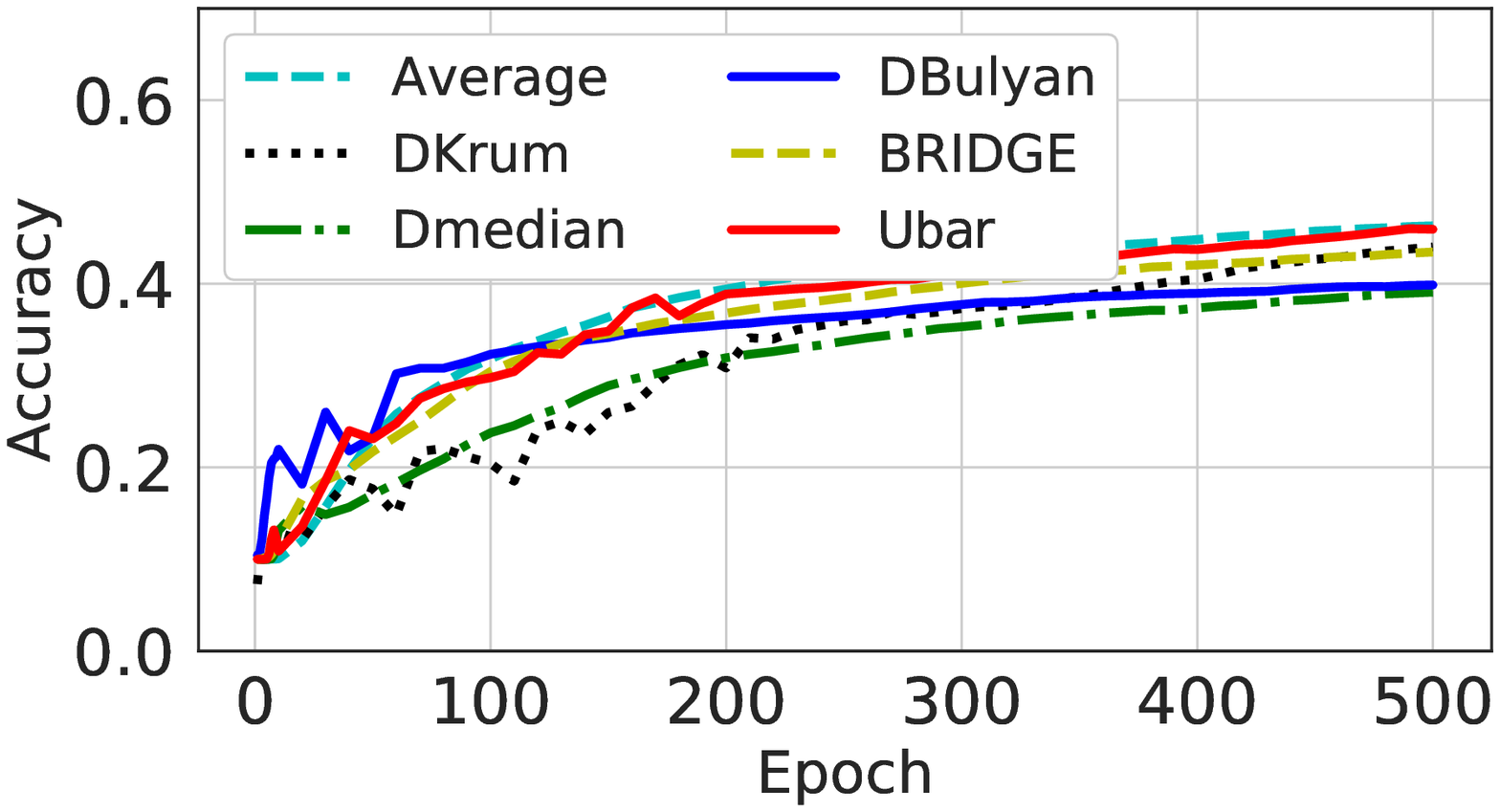}&\includegraphics[width=0.33\textwidth]{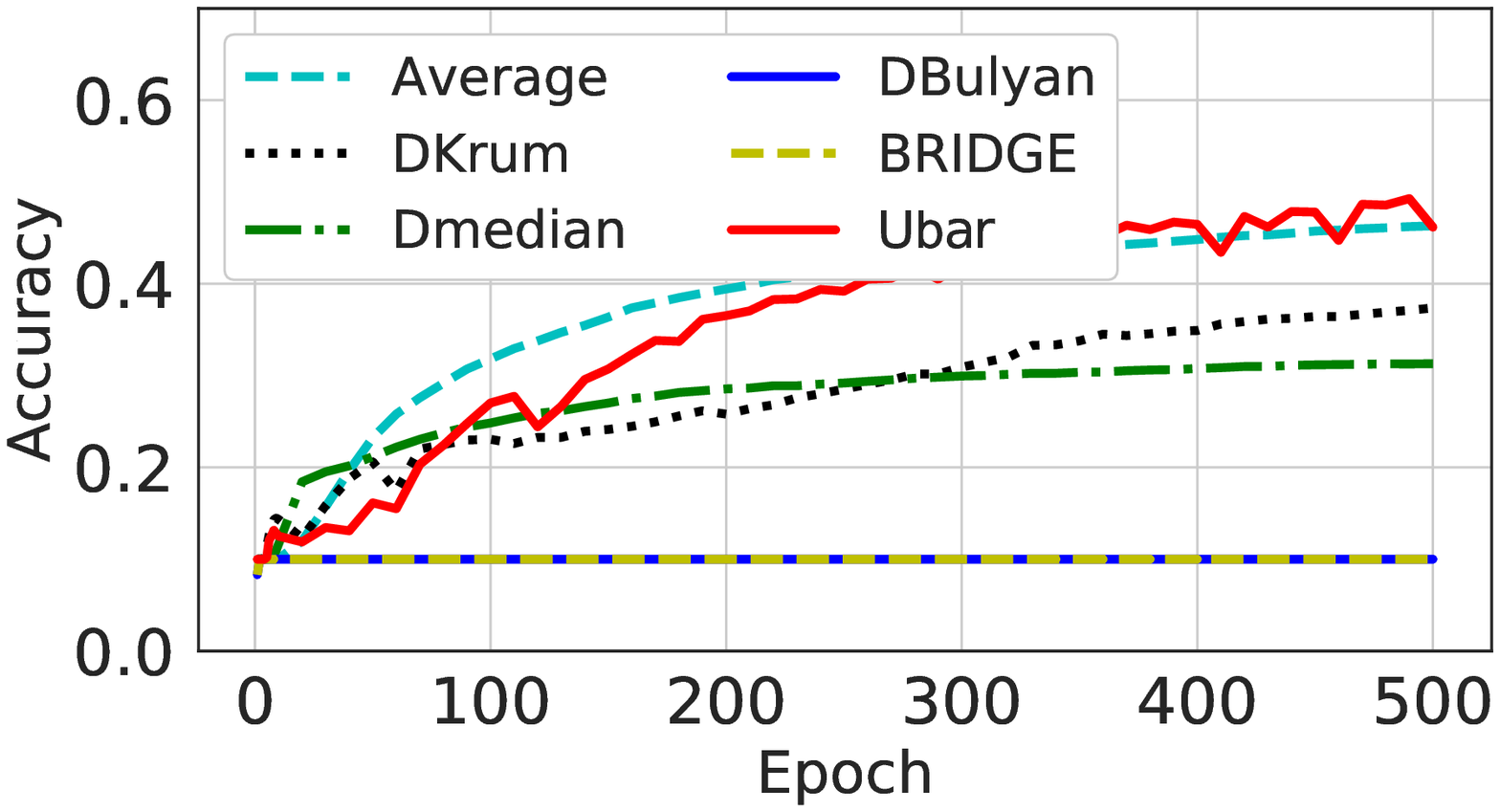}&\includegraphics[width=0.33\textwidth]{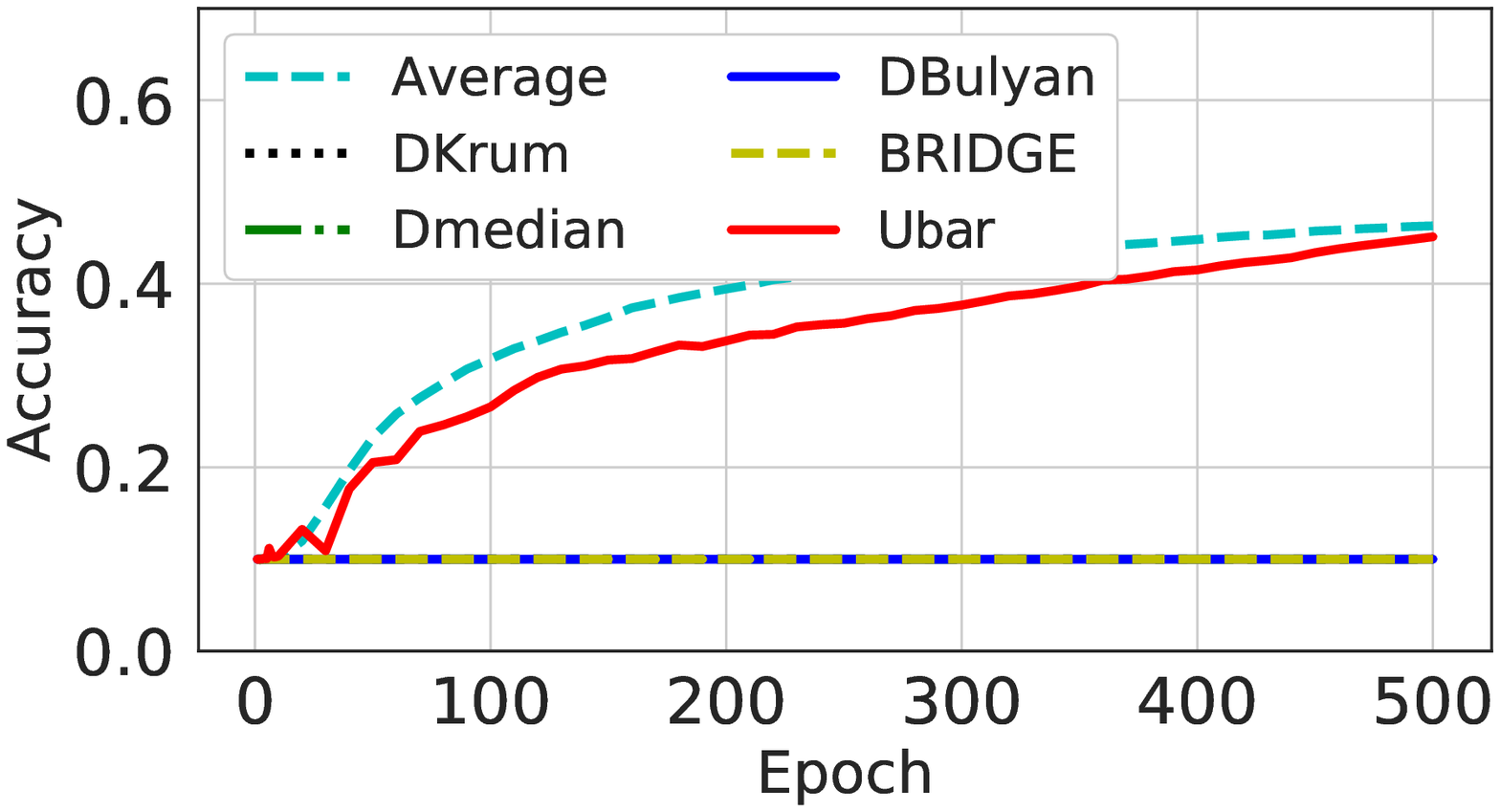}\\
			& (a) Byzantine ratio = 0.1 & (b) Byzantine ratio = 0.3& (c) Byzantine ratio = 0.5

	\end{tabular}}
	\captionof{figure}{The worst accuracy of the benign nodes under the bit-flip attack.}\label{fig:cifar_bitflip_attack}
\end{table*}

\begin{table*}[t]\centering
	\resizebox{\textwidth}{!}{
		\begin{tabular}{c@{\hskip3pt}c@{}c@{}c}
			\rotatebox{90}{$\ \ \ \ \ \ \ \ \  $MNIST}&\includegraphics[width=0.33\textwidth]{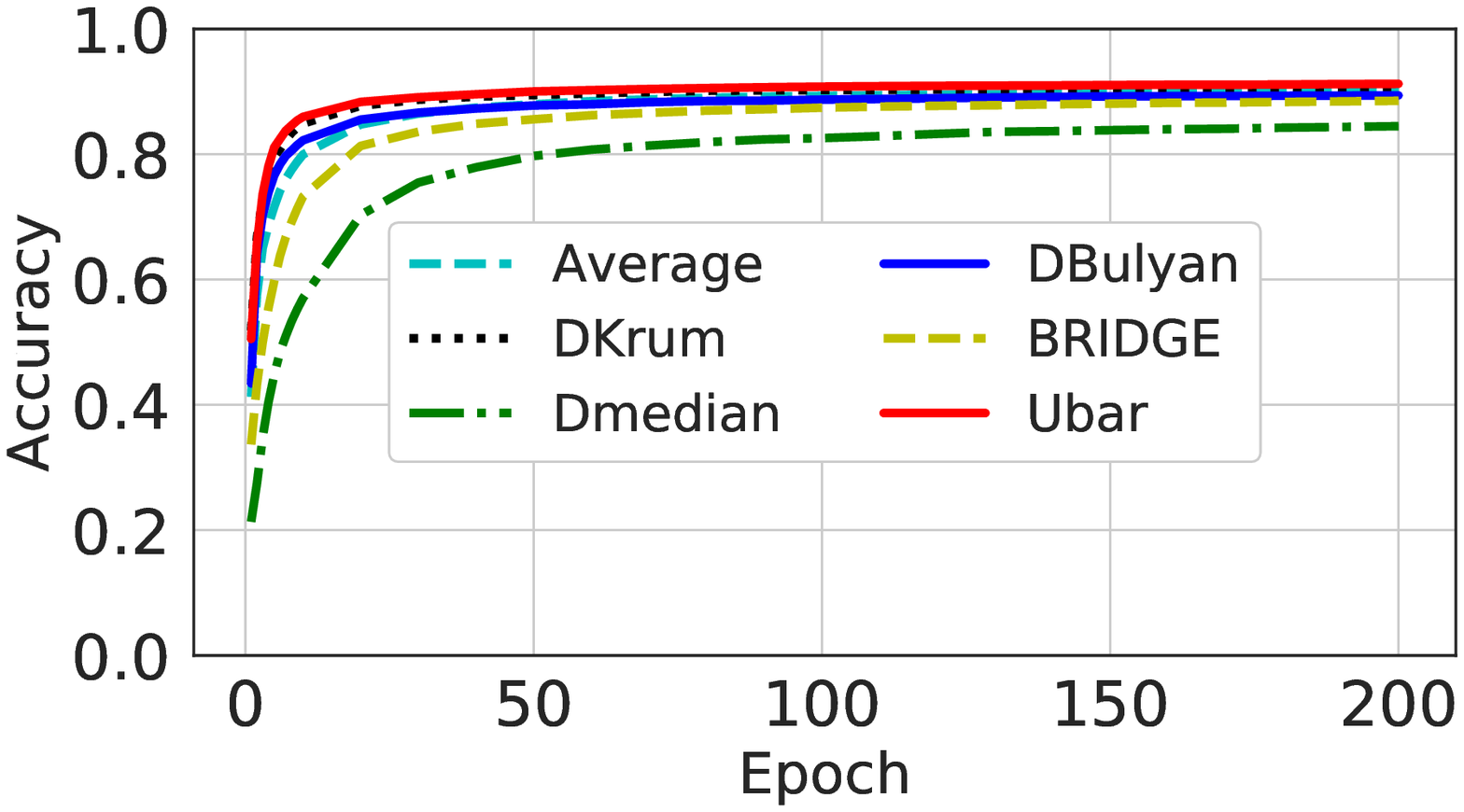}&\includegraphics[width=0.33\textwidth]{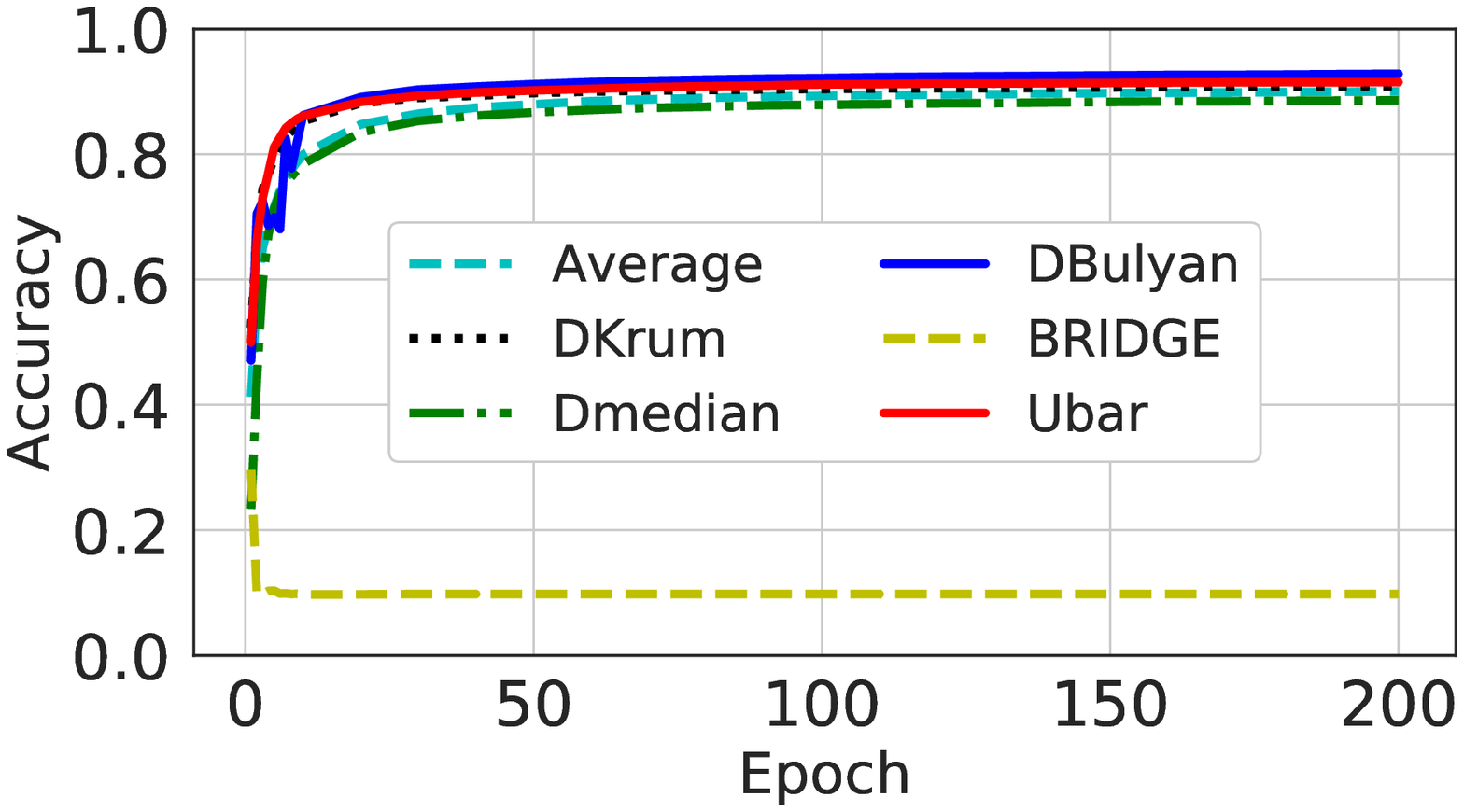}&\includegraphics[width=0.33\textwidth]{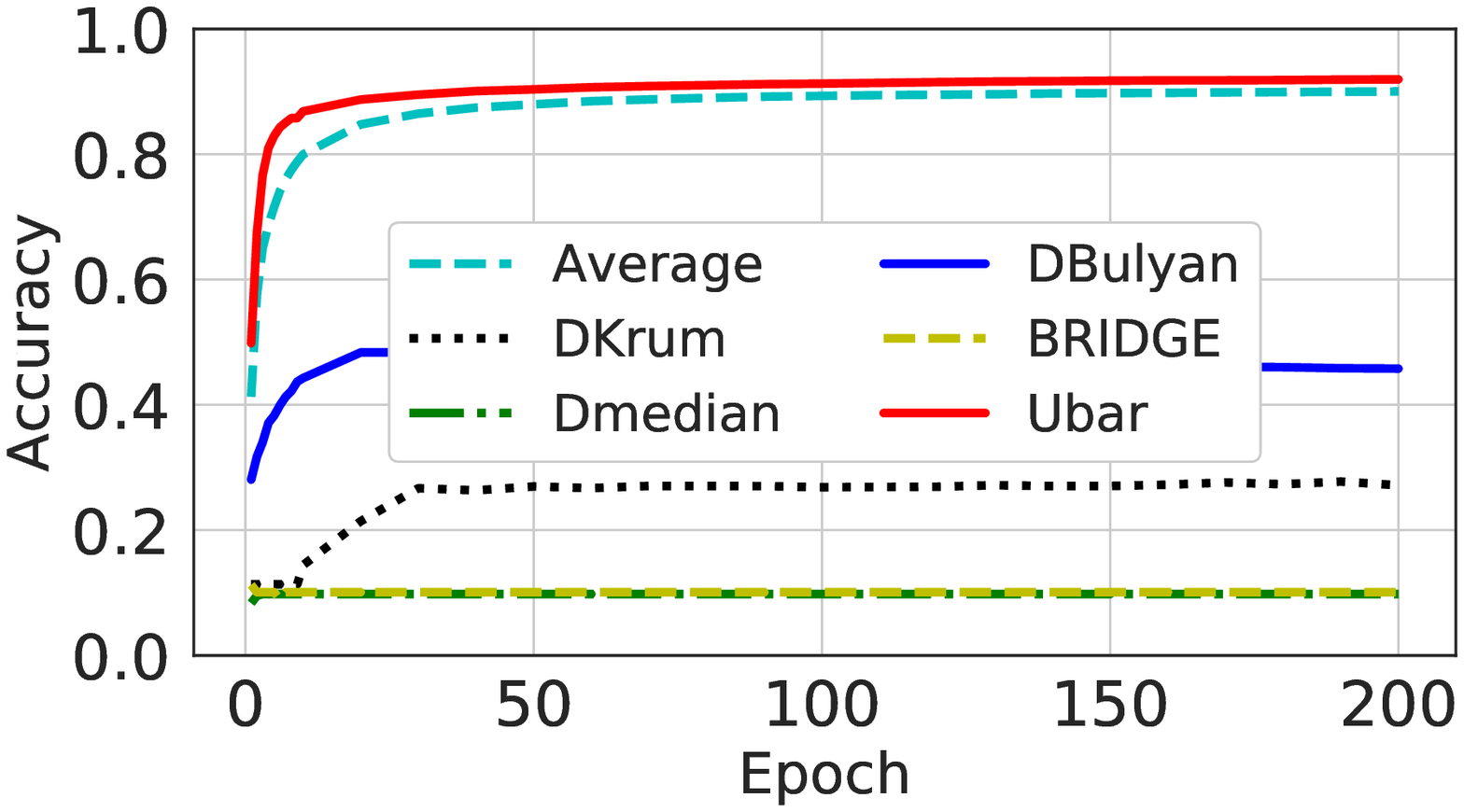}\\
			\rotatebox{90}{$\ \ \ \ \ \ \ \ \  $CIFAR10}&\includegraphics[width=0.33\textwidth]{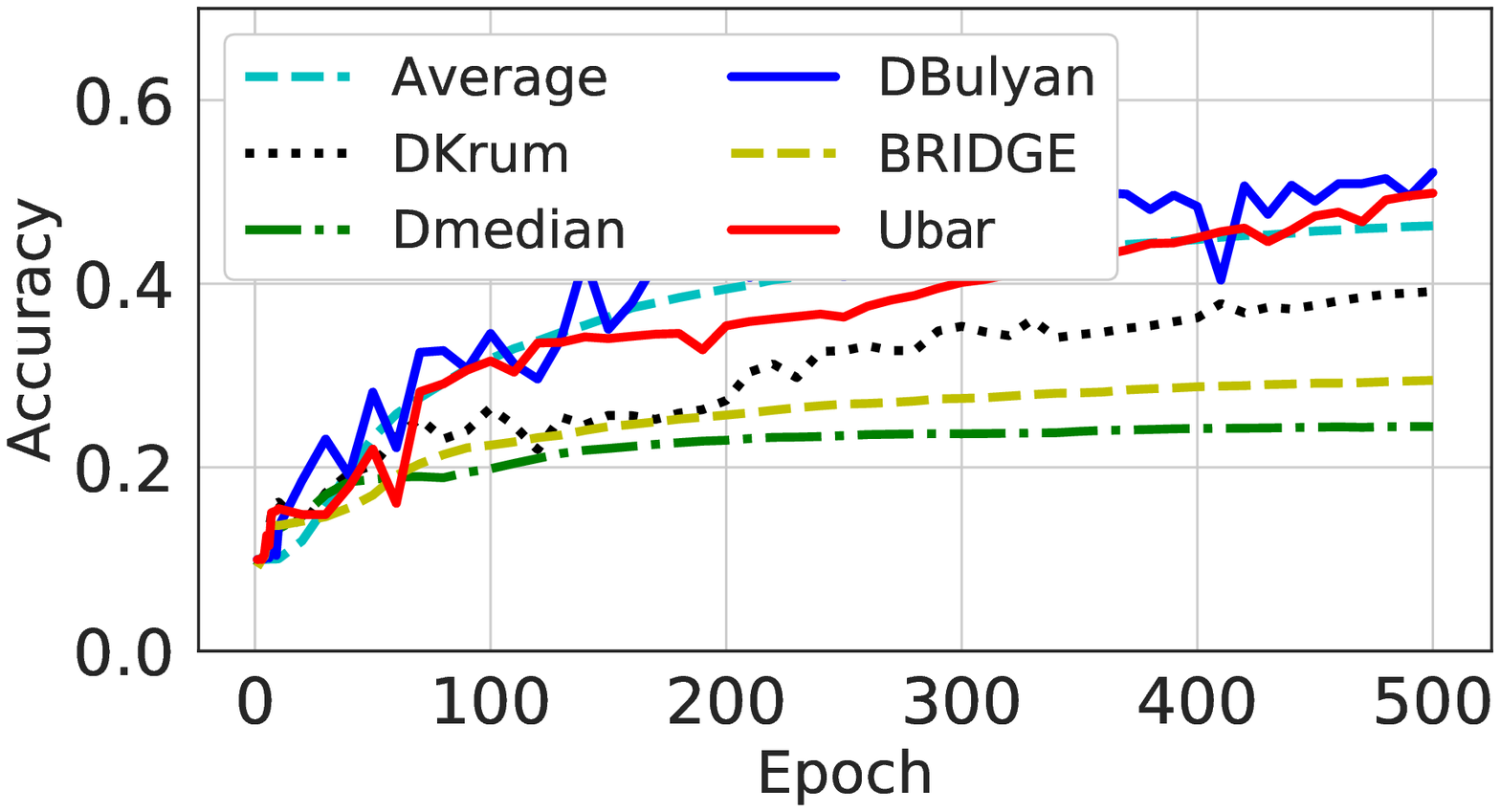}&\includegraphics[width=0.33\textwidth]{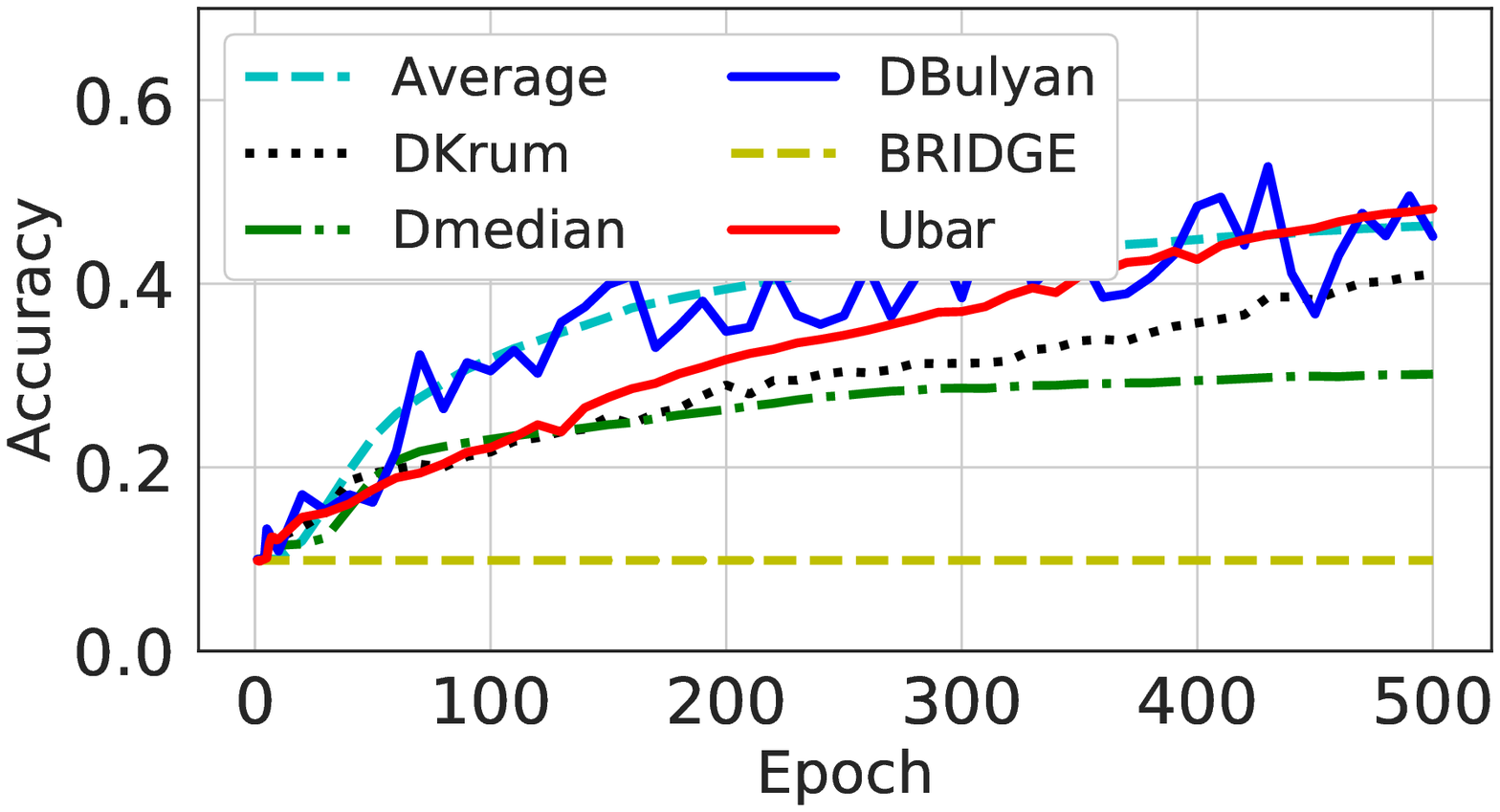}&\includegraphics[width=0.33\textwidth]{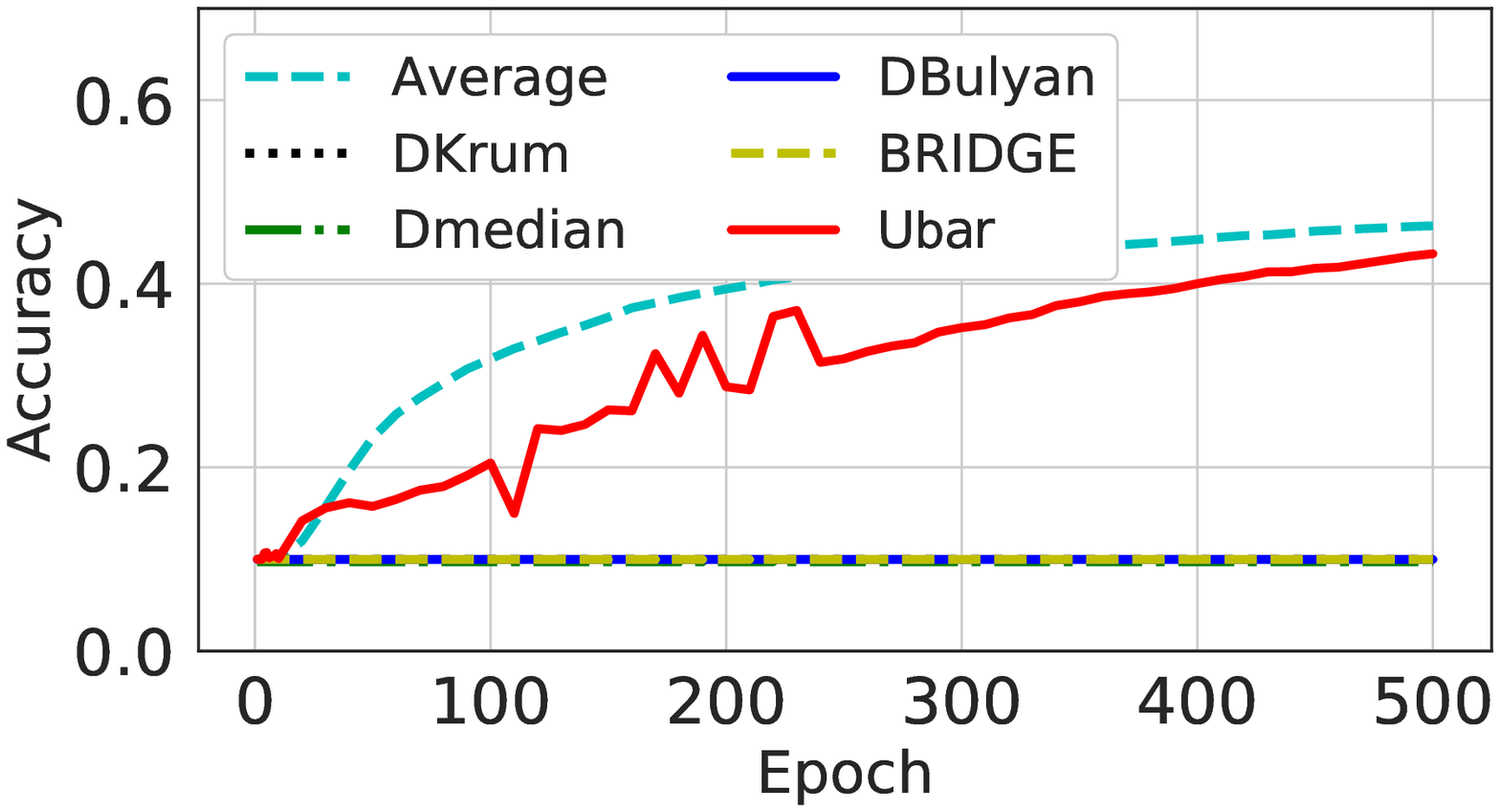}\\
			& (a) Byzantine ratio = 0.1 & (b) Byzantine ratio = 0.3& (c) Byzantine ratio = 0.5

	\end{tabular}}
	\captionof{figure}{The worst accuracy of the benign nodes under the Mhamdi attack.}\label{fig:cifar_advance_attack}
\end{table*}

\begin{table}[!htp]\centering
	\caption{The average training and aggregation time of one iteration for different aggregation rules}\label{tab:time}
	\begin{tabular}{lrrr}\toprule
	Method &Training (s) &Aggregation (s) \\
	\midrule
	Average &0.05 &0.05 \\
	Dkrum &0.05 &4.36 \\
	Dmedian &0.04 &0.26 \\
	DBulyan &0.05 &16.22 \\
	BRIDGE &0.05 &0.43 \\
	\AlgName &0.04 &0.55 \\
	\bottomrule
	\end{tabular}
\end{table}
\subsection{Convergence}
As an aggregation rule in a decentralized system, the essential functionality is to
achieve uniform convergence, i.e., the model in each benign node must converge to the
correct one. We evaluate the convergence functionality of \AlgName
with different configurations.

\bheading{Network size.}
We first evaluate the convergence of our solution under different network sizes. It is more
difficult to achieve uniform convergence when there are more nodes. In our experiments, we
consider a decentralized system with 30, 40 and 50 nodes respectively \cite{mhamdi2018hidden}. The connection ratio
is set as 0.4. Fig. \ref{fig:cifar_vary_nodes} shows the worst
accuracy during the training phase on MNIST and CIFAR10.

We can observe that only DBulyan and our proposed
\AlgName have the same convergence as the baseline on both MNIST and CIFAR10. DBulyan and \AlgName converge to a slightly better
model at a higher speed. In contrast, DKrum and BRIDGE have bad convergence performance on CIFAR10, especially when the network size is larger. Dmedian does not converge on both datasets when the network size is 50. \newtextC{We also observe that Average Aggregation Rule does not perform better than other methods even under the Byzantine-free setting. This is because this baseline needs to consider all the parameters, some of which may have poor performance even they are not malicious. In contrast, other methods selectively aggregate certain parameters with positive contributions to the model convergence, thus exhibiting better robustness.}

\bheading{Network connection ratio.}
This factor can also affect the model convergence: it takes more effort and time for all
nodes to reach the consensus when the connection is heavier. We evaluate such impact with
different connection ratio (0.2, 0.4 and 0.6), while fixing the number of nodes as 30.

Fig. \ref{fig:cifar_vary_connection} illustrates that most defense solutions in our consideration
have satisfactory convergence performance when the connection ratio is small (0.2 and 0.4). Our proposed \AlgName has better convergence performance when the connection ratio is 0.6. The BRIDGE
and DMedian approaches cannot produce correct models at this high connectivity.

\subsection{Byzantine Fault Tolerance}\label{sec:bft}
We evaluate the performance of different defense strategies under various Byzantine attacks. We set the connection ratio of the evaluated system as 0.4 and the number of
benign nodes as 30. We consider different Byzantine ratios (0.1, 0.3 and 0.5).

\bheading{Guassian attack.}
We first use a simple attack to test the Byzantine resilience. Specifically, in each iteration
the adversarial nodes broadcast to their neighbors random estimate vectors following the Gaussian
distribution. We refer to this kind of attack as Gaussian attack.

Fig. \ref{fig:cifar_simple_attack} Illustrates the model training performance under the Gaussian attack. The advantage of \AlgName over other strategies
is obvious. Dmedian, DBulyan and BRIDGE do not uniformly converge in all
systems of CIFAR10. DKrum fails to converge at the Byzantine ratio of 0.5. Only \AlgName can generate
the correct model regardless of the Byzantine ratio on both datasets.

\bheading{Bit-flip attack.}
We also implement a bit-flip attack \cite{xie2018generalized} to evaluate these defenses, where at each iteration the adversarial nodes flip the sign of the floating estimates and then broadcast these fault estimates to their neighbors.

Fig. \ref{fig:cifar_bitflip_attack} shows the convergence results: the advantage of \AlgName over other strategies
is more obvious. \AlgName has the same performance as the baseline regardless of the Byzantine ratio on both datasets. This indicates that it is absolutely Byzantine-resilient against the bit-flip attack. In contrast, Dmedian (resp. DBulyan) do not converge uniformly when the Byzantine ratio is high (0.3) on CIFAR10 (resp. MNIST). BRIDGE performs unsatisfactory on both datasets and all baselines fail to defeat the bit-flip attack when the Byzantine ratio is 0.5.

\bheading{Sophisticated attack.}
To fully evaluate the BFT of our proposed approach, we adopt a more
sophisticated attack, Mhamdi attack \cite{mhamdi2018hidden}: the adversary has the capability
of collecting all the uploaded estimates from other neighbor nodes. Then it can carefully design
its own estimate to make it undetectable from the benign ones, while still compromising the training process. Mhamdi attack has been shown effective against most defenses in centralized PS-based systems \cite{mhamdi2018hidden}.

The results are shown in Fig.
\ref{fig:cifar_advance_attack}. We observe that \AlgName can always succeed
for different Byzantine ratios on both datasets. In contrast, other solutions
fail to defeat Mhamdi attack in some cases: all solutions fail to converge when the
Byzantine ratio is 0.5. Dmedian and BRIDGE cannot converge even when the
Byzantine ratio is 0.1 on CIFAR10.

% \begin{figure}[t]
% 	\centering
% 	\subfigure{\includegraphics[width=0.65\columnwidth]{}}
% 	\caption{The average training and aggregation time of one iteration for different aggregation rules.}\label{fig:time}
% \end{figure}

\subsection{Computation Cost}
To evaluate the computation cost of \AlgName, we measure the average training and
aggregation time for one iteration on CIFAR10. We consider a decentralized system with 30 nodes
and the connectivity is 0.4. Tab. \ref{tab:time} shows the average time of different
defense solutions. We can see that the training time for those solutions are identical, but the aggregation time differs a lot. \AlgName can finish one iteration in a much shorter time
than DKrum (8X faster) and DBulyan (30X faster). The reason is that while \AlgName only calculates the difference between a node and its neighbors, DKrum and DBulyan have to compare the distances among all neighbors. \AlgName is slightly worse than Dmedian and BRIDGE. But considering the bad convergence of Dmedian
and BRIDGE under Byzantine attacks demonstrated in Section~\ref{sec:bft}, we conclude that
\AlgName is the optimal solution with the strongest Byzantine resilience and acceptable computation overhead.

\section{Discussion}\label{sec:disscussion}
\noindent\textbf{Ratio of Benign Neighbors}. It is worth noting that each benign node needs to know the ratio of benign neighbors to perform robust aggregation. This assumption is commonly adopted in prior works \cite{blanchard2017machine,xie2019zeno,yang2019bridge}: the nodes can assess the threat of the surrounding environment before the training process and sets $\rho_i$ accordingly. In case a node does not have such knowledge, it can conservatively set $\rho_i = \frac{1}{|\mathcal{N}_i|}$, where it just assumes one benign neighbor according to Assumption 1. How to accurately estimate $\rho_i$ and aggregate the parameters in an environmental-agnostic way is beyond of the scope of our work.

\noindent\textbf{Non-IID Scenario}.
In this paper, we mainly consider the setting where all the clients use the IID data samples for collaborative training. This is consistent with other Byzantine-resilience works \cite{blanchard2017machine,xie2018generalized,yin2018byzantine,mhamdi2018hidden,xie2019zeno++,munoz2019byzantine,panjustinian,yang2019byrdie,yang2019bridge}. In reality, different clients may use non-IID training data to increase the model generalization. This will increase the difficulty of Byzantine defense, since it is hard for a node to distinguish a malicious neighbor from a benign neighbor using different distributions of samples. How to design a Byzantine-resilience method under the Non-IID scenario is a challenging task, and very few works can achieve such protection\footnote{For example, \cite{cao2020fltrust} evaluated the Non-IID case. However, this work still adopted the same training set, but just distributed unbalanced samples to different clients.}. This will be an important research direction as future work.

\section{Conclusion}~\label{sec:conclusion}
In this paper, we explore the Byzantine Fault Tolerance in decentralized learning
systems. We demonstrate that
a decentralized system is highly vulnerable to Byzantine attacks. We show that
existing Byzantine-resilient solutions in centralized PS-based systems cannot be used to
protect decentralized systems due to their security flaws and inefficiency. Then
we propose a uniform Byzantine-resilient approach, \AlgName to defeat Byzantine
attacks in decentralized learning. Experimental results reveal that \AlgName can
resist both simple and sophisticated Byzantine attacks with low computation overhead
under different system configurations.

\bibliographystyle{body/IEEEtran}
\bibliography{body/ref}

\end{document}